\newtheorem{theorem}{Theorem}
\newtheorem{lemma}[theorem]{Lemma}
\theoremstyle{definition} % CHANGE IF NEEDED
\newtheorem{definition}{Definition}
\newenvironment{squishitemize}
{\begin{list}{\textbullet}{%
    \setlength{\itemsep}{0pt}%
    \setlength{\parsep}{0pt}%
    \setlength{\topsep}{0pt}%
    \setlength{\parskip}{0pt} %
    \setlength{\labelwidth}{.5in}%
    \setlength{\labelsep}{0.05in} %
    \setlength{\leftmargin}{.15in} %
    }}
  {\end{list}}
  \newenvironment{squishenumerate}
  {\begin{list}{\arabic{enumi}.}{%
    \usecounter{enumi}%
    \setlength{\itemsep}{0pt}%
    \setlength{\parsep}{0pt}%
    \setlength{\topsep}{0pt}%
    \setlength{\parskip}{0pt}%
    \setlength{\labelwidth}{.5in}%
    \setlength{\labelsep}{0.05in}%
    \setlength{\leftmargin}{.2in}}}
  {\end{list}}
\newcommand{\calA}{\mathcal{A}}
\newcommand{\calB}{\mathcal{B}}
\newcommand{\calQ}{\mathcal{Q}}
\newcommand{\calR}{\mathcal{R}}
\def\reals{{\mathbb R}}
\def\naturals{{\mathbb N}}
\newcommand{\norm}[1]{\left\| #1 \right\|}
\newcommand{\eps}{\epsilon}
\newcommand{\tvd}{\mathrm{d_{TV}}}
\newcommand{\univ}{\mathcal{Z}}
\newcommand{\dist}{\mathbf{P}}
\newcommand{\samp}{z}
\newcommand{\sample}{\mathbf{\samp}}
\newcommand{\query}{q}
\newcommand{\queries}{k}
\newcommand{\queryset}{Q}
\newcommand{\accgame}{\mathsf{Acc}}
\newcommand{\mech}{\calA}
\newcommand{\adv}{\calQ}
\newcommand{\mem}{\texttt{mem}}
\newcommand{\curv}{\texttt{curv}}
\newcommand{\eqdef}{\mathrel{\mathop:}=}
\newcommand{\pr}[2]{\underset{#1}{\mathbb{P}}\left[ #2 \right]}
\newcommand{\ex}[2]{\underset{#1}{\mathbb{E}}\left[ #2 \right]}
\newcommand{\getsr}{\gets_{\mbox{\tiny R}}}
\newcommand{\from}{:}
\newcommand{\eqand}{\qquad \textrm{and} \qquad}
\newcommand\extrafootertext[1]{%
    \bgroup
    \renewcommand\thefootnote{\fnsymbol{footnote}}%
    \renewcommand\thempfootnote{\fnsymbol{mpfootnote}}%
    \footnotetext[0]{#1}%
    \egroup
}
\newcommand{\ind}{\mathds{1}}
\title{Efficiently Attacking Memorization Scores}
\author{Tue Do, Varun Chandrasekaran, Daniel Alabi}
\begin{document}

\maketitle

\begin{abstract}
Influence estimation tools—such as memorization scores—are widely used to understand model behavior, attribute training data, and inform dataset curation. However, recent applications in data valuation and responsible machine learning raise the question: can these scores themselves be adversarially manipulated? In this work, we present a systematic study of the feasibility of attacking memorization-based influence estimators. We characterize attacks for producing highly memorized samples as highly sensitive queries in the regime where a trained algorithm is accurate. Our attack (calculating the pseudoinverse of the input) is practical, requiring only black-box access to model outputs and incur modest computational overhead. We empirically validate our attack across a wide suite of image classification tasks, showing that even state-of-the-art proxies are vulnerable to targeted score manipulations. In addition, we provide a theoretical analysis of the stability of memorization scores under adversarial perturbations, revealing conditions under which influence estimates are inherently fragile. Our findings highlight critical vulnerabilities in influence-based attribution and suggest the need for robust defenses. All code can be found at \url{https://github.com/tuedo2/MemAttack}
\end{abstract}

\section{Introduction}
\label{sec:intro}

Online data market platforms, such as AWS Data Exchange~\citep{AWSD}, Dawex~\citep{dawex}, Xignite~\citep{xignite}, WorldQuant~\citep{worldquant}, are spaces where data is bought and sold. 
%Just as a physical market allows buyers and sellers to exchange goods, a data market facilitates the exchange of data. 
Concretely, there are three major entities in a data market: {platforms}, {buyers}, and {sellers/providers}~\citep{dm:survey:kennedy2022revisiting, AGMSW25}. The data market platform performs {\em data valuation} based on the acquired data from data sellers~\citep{dm:survey:reallife:azcoitia2022survey, pricing:mehta2021sell, dm:agarwal2019marketplace, decentralized_trust:fan2020decentralized, valuation:wang2023data}. 
Sellers in data markets offer datasets (collections of information) that are valuable for businesses, researchers, or governments. Buyers look for data that can help them make better decisions, run more effective marketing campaigns, or improve products and services. 
{\em Data valuation} is released to buyers who use that information to buy data~\citep{negotiation:jung2019privacy, negotiation:ray2020bargaining}. 

Influence functions such as Shapley values are commonly used to price data (proportional to its valuation) in a data
market~\citep{valuation:yan2021if, valuation:fl:song2019profit, valuation:fl:wang2020principled}. A series of recent papers~\citep{feldman2020neural,feldman2020does,brown2021memorization} propose the {\em label memorization score} for supervised classification (in machine learning settings): in large datasets, a small subset of highly influential (memorized) training examples disproportionately affects the model’s predictions and generalization capabilities, while the majority of examples have little to no impact. %In other words, a small fraction of the dataset (the ``head'') significantly influences the model’s predictions and generalization ability, while most data points (the ``tail'') contribute minimally.
Clearly, this concept is relevant in data valuation, where the goal is to identify which training data points contribute most to a model’s decision-making (i.e., samples with high memorization scores are more valuable). %Intuitively, data entries with high memorization scores are underrepresented by the overall dataset and thus their individual bearing on downstream performance is higher overall than generic representative. 
While the original proposal~\citep{feldman2020neural} is computationally expensive, various studies~\citep{garg2023memorization,ravikumar2024unveiling,jiang2020characterizing,zhao2024scalability} propose efficient proxies.

Given the difficulty of obtaining high-quality data in data market platforms (and consequently their high value),
there is a clear economic incentive for some sellers to manipulate the valuation scores~\citep{AGMSW25}. 

\begin{tcolorbox}
We aim to study how malicious data sellers 
can alter 
influence valuations in data markets. In particular, we look at the robustness of \textit{memorization-based} approaches, by proposing multiple attacks based on distribution shift, stability notions, and decision boundary proximity.
\end{tcolorbox}

% In particular, as in the classic robustness setting, a fraction of sellers can collude so that they produce new samples that would have very high memorization score, thus resulting in higher valuation compared to all other (honest) data providers. 
\citet{basu2021influence} previously study the robustness of influence functions, but do not provide theoretical guarantees and restrict their empirical analyses to neural networks. 
% \begin{wrapfigure}{r}{0.6\textwidth} % side: %\begin{figure}
%     \centering
%     \includegraphics[width=0.95\linewidth]{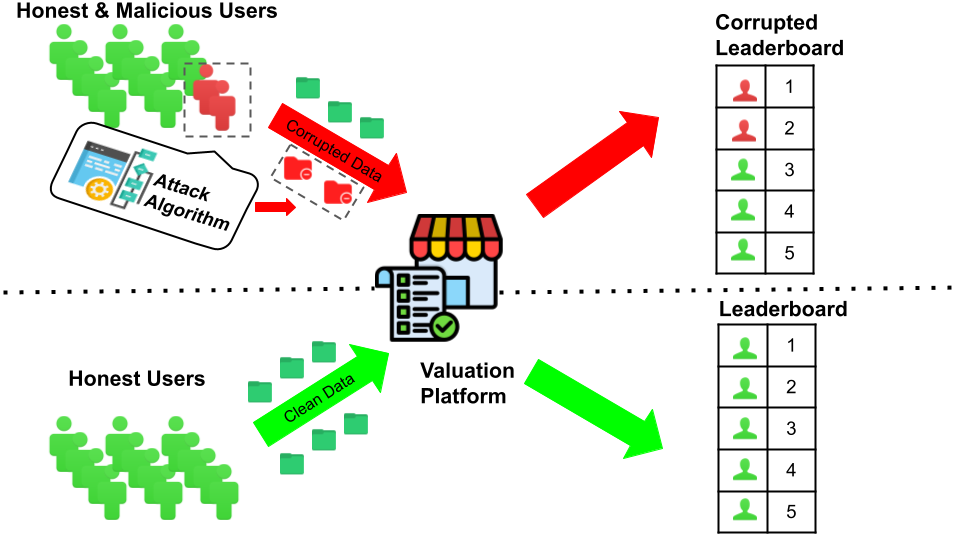}
%     \caption{An overview of how data valuation functions can be attacked.}
%     \label{fig:memattackdiagram}
% %\end{figure}
% \end{wrapfigure}
\citet{lai2020adversarial} provide an approach for adversarial robustness of general estimators (not influence estimators). \citet{yadav2024influencebasedattributionsmanipulated} address the robustness of influence functions on adversaries that control the valuation algorithm, a strong assumption. In contrast to these works, we present the {\em first theoretical analysis} for vulnerability of memorized-based valuation functions. We particularly assert that when a trained algorithm exhibits high accuracy, we can characterize highly memorized samples with highly sensitivity. We take advantage of the inverse operation's naturally high sensitivity to motivate a simple, computationally efficient, and effective Pseudoinverse attack to produce highly memorized samples in image classification tasks.
In comparison to prior work, we assume a much weaker adversarial model, one where the (malicious) data sellers only change the provided data, require no collusion, and are unaware of the exact (memorization-based) valuation algorithm. In our most effective attack, the adversary need not be aware of knowledge of the underlying data distribution! 
Figure~\ref{fig:memattackdiagram} illustrates the threat model. The simplicity, yet effectiveness, of both our theory and attack demonstrate a fundamental vulnerability in data valuation for supervised learning.
 %to adversarial corruption, and affirm the need for robust algorithms in future data markets. %we consider for adversarial collusion on memorization scores and their related proxies.

\begin{figure}
    \centering
    \includegraphics[width=0.6\linewidth]{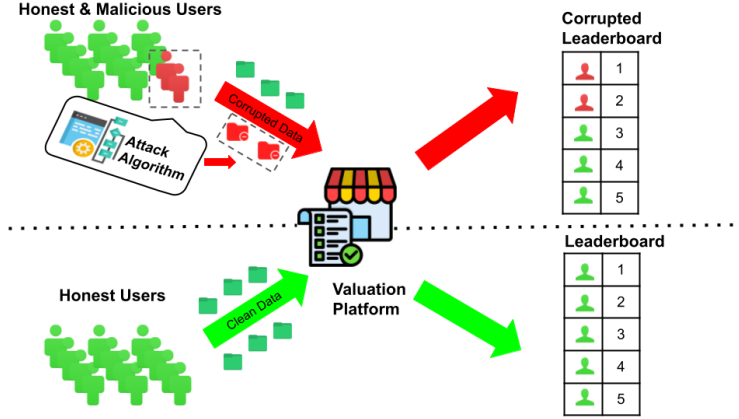}
    \caption{An overview of how data valuation functions can be attacked}
    \label{fig:memattackdiagram}
\end{figure}

We summarize our contributions below:
\begin{squishitemize}
\item We develop a theoretical framework for analyzing vulnerability of memorization scores to adversarial manipulation and demonstrate its implications on potential data pricing (\S~\ref{sec:approach}). 
\item We provide a simple and efficient Pseudoinverse attack that successfully alters memorization-based scoring of image classification tasks agnostic of underlying dataset and model architecture  (\S~\ref{sec:setup}).
\item We verify our theoretical guarantees with experiments on image classification tasks across various convolutional network architectures on the MNIST, SVHN, and CIFAR-10 datasets (\S~\ref{sec:results}), and present additional support over more complex transformer architecture, higher resolution image datasets, and other data modalities in (\S~\ref{sec:additional-experiments}).
\end{squishitemize}
\section{Related Work}
\label{sec:related-work}

\textbf{Data Markets:} \citet{dm:survey:reallife:azcoitia2022survey} give a survey on existing commercial data markets and their business models. \citet{pricing:mehta2021sell} introduce pricing policies for enabling data economies. \citet{dm:agarwal2019marketplace} introduce a mathematical model for data marketplaces. Also, recent work surveys privacy and security vulnerabilities in data markets and offers possible design solutions~\citep{AGMSW25}.
Our work introduces a realistic threat model and calls for robust algorithmic solutions for future data markets.

\textbf{Data Valuation:}
% Some recent studies explore how data attribution and data valuation functions can be adopted to facilitate dynamic systems in the data market~\citep{dm:pricing:zhao2023addressing} as well as IoT~\citep{payment:survey:saputhanthri2022survey} and database query~\citep{discovery:example:rezig2021dice} settings.
% % , while others consider scoring fairness in the decentralized learning~\citep{valuation:fl:fan2022improving} and adversarial training~\citep{yadav2024influencebasedattributionsmanipulated} setting.
% Ghorbani and Zou~\citep{ghorbani2019data} first introduce Data Shapley as a metric to quantify each individual data points marginal contribution to predictor performance. Feldman and Zhang~\citep{feldman2020neural} introduce empirical influence scoring as a means of quantifying the leverage of training data from conditional probability distributions. However these formulations require training thousands of models for accuracy, and are computationally infeasible for a real-time data market setting. \citet{park2023trak} leverages low-rank projections to give a fast approximation of influence while cutting down on the number of models required for scoring accuracy. Koh and Liang~\citep{koh2017understanding} explore a classical technique from robust statistics in influence functions to quantify a Hessian-based influence score. The memorization proxies we consider in our work only require a single training run while maintaining high correlation with the more expensive label memorization score.
Shapley values are a classical concept in game theory which were first employed for the problem of data valuation in machine learning by~\citet{ghorbani2019data}, termed Data Shapley Value, which aims to measure the influence of individual data samples on some specified performance metric. Due to computational overhead of computing Shapley Value, various influence functions have been proposed and adopted, including finding a core set of data examples~\citep{valuation:yan2021if}, gradient-based approximations to the Shapley Value~\citep{valuation:fl:song2019profit}, adaptations to federated learning setting~\citep{valuation:fl:wang2020principled}, as well as non-Shapley based influence functions such as gradient tracing~\citep{pruthi2020estimating} and low-rank kernel approximation~\citep{park2023trak}. However, it is not always clear what target for which the influence of data samples should be measured on. Common settings involve a specific test set on which the effects of influence are measured, but a definitive target test set might not be available or might change over time, or might not even be known in certain applications. Label memorization, the main metric of data valuation in this study, is a measure of self-influence that removes this particular challenge~\citep{feldman2020neural}. \citet{feldman2020does} shows that accuracy on training examples with label memorization scores is crucial to low generalization error, demonstrating that highly memorized samples strongly influence downstream tasks.

\textbf{Adversarial Attacks Against Memorization:} Privacy risk in the context of membership inference attack is closely related to memorization as shown previously by~\citet{choi2023train}.
\citet{carlini2022membership} find outliers to be more vulnerable to membership inference attack, whereas other prior work use poisoned images~\citep{jagielski2020auditing} or artificially crafted ``canaries" for privacy auditing of language models~\citep{carlini2019secret, thakkar2020understanding} to produce samples of high privacy risk, and subsequently high memorization scores. In comparison to prior adversarial studies, we present the first theoretical analysis characterizing highly memorized samples.

% \textbf{Robustness of data valuation:} 
% Basu et al.~\citep{basu2021influence} find that influence functions are sensitive to model architecture. Lai et al.~\citep{lai2020adversarial} characterize the robustness of general estimators to adversarial attacks.
% Yadav et al.~\citep{yadav2024influencebasedattributionsmanipulated} consider the setting of adversarial training for manipulation of influence scores, which presents a threat model much stronger than ours. To our knowledge, we present the first exploration in adversarial data corruption in the data market setting.

\section{Preliminaries and Notation}
\label{sec:prelims}

\subsection{Queries: Sensitivity and Accuracy}

Let $\univ$ be a universe or domain. e.g., $\univ = \reals^{3}$.
Also, let $\dist$ be a distribution over $\univ$ from which 
samples $\sample =
(\samp_1, \cdots, \samp_n) \in \univ^n$ can be drawn.
We use the notation $\samp_{1},\dots,\samp_{n} \getsr \dist$
to indicate that $\samp_{1},\dots,\samp_{n}$ is randomly
drawn from $\dist$.
A mechanism, trained by the data evaluator or data platform, can answer
\emph{queries}, from some query family $\queryset$, about $\dist$ or
$\sample$.
For any $\query\in\queryset$, we define the query answers on either
the population level or sample-level:
$$\query(\dist) = \ex{\sample \getsr \dist}{\query(\sample)} \eqand \query(\sample) = \frac{1}{n} \sum_{i\in [n]} \query(\samp_i).$$

Query families can be separated by their sensitivities, which
quantifies how much the query changes when one or more elements of
the input are changed.
Let $\sample\sim \sample'$ denote that $\sample, \sample' \in \univ^n$ differ on at most one
entry.  For any query $q\in\queryset$ and neighboring $\sample\sim \sample'$,
$\norm{q(\sample)-q(\sample')}$ (the $\ell_2$-norm of the difference between $q(\sample)$ and $q(\sample')$) 
is the sensitivity of the query $q:\univ^n\rightarrow\univ$. We can
define $\Delta$-sensitive queries:

\begin{definition}[$\Delta$-Sensitive Queries]
For $\Delta \geq 0$, $n\in \naturals$, 
these queries are specified by a function $\query \from
  \univ^n \to \univ$ where $\univ\subseteq \reals^*$.
  Further, the queries satisfy $$\norm{q(\sample)-q(\sample')} \leq \Delta,$$
  for every pair
  $\sample, \sample' \in \univ^n$ differing in only one entry.
\end{definition}

$\queryset_\Delta$ is the set of all queries with sensitivity
of at most $\Delta$.

As done in the literature  on stability and
adaptive data analysis~\citep{BNSSSU16, BousquetE02, Shalev-ShwartzSSS10},
$\mech$ is a stateful algorithm with access to samples 
$\samp_1,\dots,\samp_{n} \in \univ$.
We can define an accuracy game between a stateful \emph{adversary} $\adv$ and a mechanism $\mech$, illustrated in Figure~\ref{fig:accgame1}.
The (opposing) goal of
$\adv$ is to increase the error of the query answers
provided by $\mech$.
Because $\adv$ and $\mech$ are stateful, the queries and the
query answers may depend on the history of past queries and past 
query answers.

A primary goal of machine learning is to train an algorithm/mechanism $\mech$ that can accurately answer queries on $\dist$. The training can use (independent) samples $\samp_{1},\dots,\samp_{n} \getsr \dist$. 
Here, for each $i\in[n]$, $\samp_{i} =  (x_i, y_i)$ corresponds to
feature-label pairs, which are used to train $\mech$. Then a query for $\mech$
could be: \textit{given $\sample\in\univ^n$, what is the label $y_{n+1}$ for feature vector
$x_{n+1}$?} Clearly, this can be encoded via the query function $\query \from
  \univ^n \to \univ$ where $\query(\sample) = \query^*(\sample, x_{n+1})$ and
  $\query^*:\univ^n\times\univ\rightarrow\univ$ takes in samples and new
  feature vector  $x_{n+1}$.
The query answer is
$\samp_{n+1} = (x_{n+1}, y_{n+1})\in\univ$. For any $j\in[k]$, we measure query accuracy as
$\norm{a_j - q_j(\dist)}$ where $a_j$ is the answer by $\mech$---based on the samples
$\sample$ and  previous query answers---and
$q_j(\dist)$ is the query answer on the population level.
Note that the formalism goes beyond 
classification:
we might want to estimate the mean of a population in which case the query
is the population mean and the answer could be the mean of the samples
$\sample = (\samp_{1},\dots,\samp_{n})$.

\begin{figure}[ht!]
\begin{framed}
\begin{algorithmic}
\STATE{Sample $\samp_1,\dots,\samp_{n} \getsr \dist$ and let $\sample = (\samp_1,\dots,\samp_{n})$.}
\STATE{For $j = 1,\dots,\queries$}
\STATE{\quad$\adv$ outputs a query $\query_{j} \in \queryset$.}
\STATE{\quad$\mech(\sample, \{\query_{t}\}_{t=1}^j)$ outputs $a_j$.}
\end{algorithmic}
\end{framed}
\caption{The Accuracy Game $\accgame_{n, \queries, \queryset}[\mech, \adv]$ \label{fig:accgame1}}
\end{figure}

Since queries are meant to return answers, we measure how accurate the answers are either on the population or specific samples.

\begin{definition}[Population Accuracy] \label{def:accuratemechanism}
A mechanism $\mech$ is \emph{$(\alpha,\beta)$-accurate with respect to the population $\dist$ for $\queries$, potentially adaptively, chosen queries from $\queryset$ given $n$ samples in $\univ$} if for every adversary $\adv$,
$$
\pr{\accgame_{n, \queries, \queryset}[\mech, \adv]}{
\max_{ j \in [\queries]} ~ \norm{q_j(\dist) - a_j} \leq \alpha} \geq 1 - \beta.
$$
\end{definition}

% We can also measure accuracy relative to the samples given to the mechanism. The sample accuracy game is given in Figure~\ref{fig:sampleaccgame1}.

% \begin{figure}[ht!]
% \begin{framed}
% \begin{algorithmic}
% \STATE{$\adv$ chooses $\sample = (\samp_1,\dots,\samp_{n}) \in  \univ^n$.}
% \STATE{For $j = 1,\dots,\queries$}
% \STATE{\quad$\adv$ outputs a query $\query_{j} \in \queryset$.}
% \STATE{\quad$\mech(\sample, \{\query_{t}\}_{t=1}^j)$ outputs $a_j$.}
% \end{algorithmic}
% \end{framed}
% \caption{The Sample Accuracy Game $\sampaccgame_{n, \queries, \queryset}[\mech, \adv]$\label{fig:sampleaccgame1}}
% \end{figure}

% \begin{definition}[Sample Accuracy] \label{def:accuratemechanism-samp}
% A mechanism $\mech$ is \emph{$(\alpha,\beta)$-sample-accurate with respect to samples of size $n$ from $\univ$ for $\queries$ adaptively chosen queries from $\queryset$} if for every adversary $\adv$,
% $$
% \pr{\sampaccgame_{n, \queries, \queryset}[\mech, \adv]}{\max_{ j \in [\queries] }~~~ \norm{q_j(\sample)-a_j}  \leq \alpha} \geq 1-\beta.
% $$
% \end{definition}

In Definition~\ref{def:accuratemechanism}, the 
randomness is over algorithms $\mech, \adv$ and the distribution $\dist$.

\subsection{Stability Notions}

A variety of stability notions (such as Max-KL stability, KL stability,
and TV stability) capture how sensitive an algorithm is to
changes to its input~\citep{MAL-112}. In our work, we focus on Max-KL stability but
our theoretical results can be ported to other notions of stability:

\begin{definition}[Max-KL Stability]
\label{def:MKLstable}
Let $\mech \from \univ^n \to \calR$ be a randomized algorithm.  We say that $\mech$ is \emph{$(\eps, \delta)$-max-KL stable} if for every pair of samples $\sample, \sample'$ that differ on exactly one element, and every $R \subseteq \calR$, 
$$
\pr{}{\mech(\sample) \in R} \leq e^{\eps} \cdot \pr{}{\mech(\sample') \in R} + \delta.
$$
\end{definition}

For any $\eps, \delta\geq 0$, it is easy to see that
Max-KL stability is equivalent to
\emph{$(\eps, \delta)$-differential privacy}~\cite{DMNS06, DKMMN06}.

Post-processing refers to the notion that a property (e.g., stability)
is preserved after certain modifications.
All the stability notions are preserved under 
post-processing:

\begin{lemma}[Post-Processing of Stability Notions (e.g., see~\citep{bun2016concentrated})] \label{lem:postprocessing_gen}
Let $\mech \from \univ^n \to \calR$ and $f \from \calR \to \calR'$ be a pair of randomized algorithms.  If $\mech$ is \{$\eps$-TV, $\eps$-KL, $(\eps, \delta)$-max-KL\}-stable then the algorithm $f(\mech(\sample))$ is \{$\eps$-TV, $\eps$-KL, $(\eps, \delta)$-max-KL\}-stable.
\end{lemma}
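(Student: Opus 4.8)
The plan is to prove the three cases in a unified way, exploiting that post-processing is precisely the statement that a (possibly randomized) map $f$ cannot amplify any of these distance/divergence notions between the output distributions $\mech(\sample)$ and $\mech(\sample')$ on a neighboring pair $\sample \sim \sample'$. First I would reduce the randomized map $f$ to the deterministic case: any randomized $f \from \calR \to \calR'$ can be written as $f(r) = g(r, \omega)$ for a deterministic $g$ and an internal random seed $\omega$ drawn independently of $\mech$. Because this seed is independent of the input $\sample$, I can condition on $\omega$, establish the desired inequality for each fixed deterministic slice $g(\cdot, \omega)$, and then average over $\omega$; linearity of expectation preserves the affine form $e^{\eps}(\cdot) + \delta$ (and the analogous bounds for TV and KL). This isolates the real content into the deterministic case.

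For the $(\eps,\delta)$-max-KL (equivalently $(\eps,\delta)$-DP) case with $f$ deterministic, I would use the preimage trick. Fix any measurable $R' \subseteq \calR'$ and set $R = f^{-1}(R') \subseteq \calR$. Then the event $f(\mech(\sample)) \in R'$ is identical to $\mech(\sample) \in R$, so applying Definition~\ref{def:MKLstable} directly to the set $R$ gives
$$\pr{}{f(\mech(\sample)) \in R'} = \pr{}{\mech(\sample) \in R} \leq e^{\eps}\,\pr{}{\mech(\sample') \in R} + \delta = e^{\eps}\,\pr{}{f(\mech(\sample')) \in R'} + \delta,$$
which is exactly the max-KL stability of $f \circ \mech$. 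Averaging over the seed $\omega$ as above then upgrades this to randomized $f$.

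For the $\eps$-TV and $\eps$-KL cases I would instead invoke the data-processing inequality for the corresponding $f$-divergence: both total variation distance and KL divergence are non-increasing under any Markov kernel. Writing $\mu, \mu'$ for the laws of $\mech(\sample), \mech(\sample')$ and $f_\ast\mu, f_\ast\mu'$ for their pushforwards under $f$, data processing yields $\tvd(f_\ast\mu,\, f_\ast\mu') \leq \tvd(\mu,\,\mu')$ and $D_{\mathrm{KL}}(f_\ast\mu \,\|\, f_\ast\mu') \leq D_{\mathrm{KL}}(\mu \,\|\, \mu')$, so the stability parameter can only shrink.

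The main care point is the randomized-to-deterministic reduction: it rests on the internal randomness of $f$ being independent of $\mech$'s output, so I would state the joint measure cleanly and confirm measurability of $f^{-1}(R')$. This is the only place a naive argument could go wrong, by inadvertently correlating the two sources of randomness; everything else is a direct application of Definition~\ref{def:MKLstable} and standard data-processing facts.
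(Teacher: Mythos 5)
Your proof is correct. Note that the paper itself supplies no proof of this lemma---it is stated as a known fact and deferred to the cited stability/concentrated-DP literature---and your argument (the preimage trick $R = f^{-1}(R')$ for the $(\eps,\delta)$-max-KL case, the data-processing inequality for the TV and KL cases, and conditioning on an independent seed to handle randomized $f$) is exactly the standard proof that citation points to. One small simplification you could make: the data-processing inequality for $f$-divergences already holds for arbitrary Markov kernels, so for the TV and KL branches the reduction to deterministic $f$ is unnecessary; the seed-conditioning step is only needed (and is correctly handled) in the max-KL branch, where it indeed preserves the affine bound $e^{\eps}(\cdot)+\delta$ by linearity of expectation.
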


In addition, composition of disjoint databases preserves
the stability notions:
\begin{lemma}[Post-Processing of Stability Notions (e.g., see~\citep{bun2016concentrated})] \label{lem:comp_gen}
Let $\mech_1 \from \univ^n \to \calR$ and $\mech_2 \from \univ^n \to \calR$ be a pair of randomized algorithms.  If $\mech_1, \mech_2$ are \{$\eps$-TV, $\eps$-KL, $(\eps, \delta)$-max-KL\}-stable then $(\mech_1(\sample_1), \mech_2(\sample_2))$ is \{$\eps$-TV, $\eps$-KL, $(\eps, \delta)$-max-KL\}-stable if $\sample_1, \sample_2$ are
disjoint.
\end{lemma}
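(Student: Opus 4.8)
The plan is to reduce the joint statement to the single-mechanism stability hypothesis by exploiting disjointness. Because $\sample_1$ and $\sample_2$ occupy disjoint records, any two neighboring inputs to the composed mechanism $(\mech_1(\sample_1), \mech_2(\sample_2))$---differing in exactly one element---can perturb at most one of the two parts. So if the altered record lies in $\sample_1$ we replace $\sample_1$ by a neighbor $\sample_1'$ while $\sample_2$ stays fixed, and symmetrically if it lies in $\sample_2$. By symmetry it suffices to treat the former case, which reduces the problem to comparing the joint law of $(\mech_1(\sample_1), \mech_2(\sample_2))$ against that of $(\mech_1(\sample_1'), \mech_2(\sample_2))$, where only the first coordinate's input has changed. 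The second ingredient is that $\mech_1$ and $\mech_2$ use independent randomness, so each joint law is a product measure sharing the common second factor $P_2$, the law of $\mech_2(\sample_2)$.

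For $(\eps,\delta)$-max-KL stability I would fix an output $r_2$ of $\mech_2(\sample_2)$, slice any event $R \subseteq \calR \times \calR$ into $R_{r_2} = \{r_1 : (r_1, r_2) \in R\}$, apply the max-KL guarantee of $\mech_1$ to the slice to get $\pr{}{\mech_1(\sample_1) \in R_{r_2}} \le e^{\eps}\,\pr{}{\mech_1(\sample_1') \in R_{r_2}} + \delta$, and integrate this inequality against $P_2$. Since $\int dP_2 = 1$ the additive $\delta$ passes through unchanged and the multiplicative $e^{\eps}$ factors out, yielding exactly $\pr{}{(\mech_1(\sample_1),\mech_2(\sample_2)) \in R} \le e^{\eps}\,\pr{}{(\mech_1(\sample_1'),\mech_2(\sample_2)) \in R} + \delta$.

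For the $\eps$-TV and $\eps$-KL variants I would instead invoke the corresponding product-measure identities. Tensoring two distributions with a common independent factor preserves total variation, $\tvd(P \otimes Q, P' \otimes Q) = \tvd(P, P')$, and preserves KL divergence by additivity over products, since the redundant factor contributes $D_{\mathrm{KL}}(Q \| Q) = 0$. Taking $P, P'$ to be the laws of $\mech_1(\sample_1), \mech_1(\sample_1')$ and $Q = P_2$ shows the joint divergence equals the single-mechanism divergence, which is at most $\eps$ by hypothesis.

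The main obstacle is conceptual rather than computational: making precise what disjointness buys us---namely that the neighbor relation on the composed input never activates both stability guarantees at once, which is exactly why the parameters do not degrade, in contrast to ordinary sequential composition---and being explicit that $\mech_1$ and $\mech_2$ draw independent random coins, which is what licenses the product-measure structure underlying every case. Once those two points are pinned down, each case is a one-line integration or a standard product identity.
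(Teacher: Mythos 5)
Your proof is correct: the paper itself states this lemma without proof (deferring to the cited literature on concentrated differential privacy), and your argument is precisely the standard one for parallel composition on disjoint databases --- disjointness ensures a single changed record perturbs only one of $\sample_1, \sample_2$, the slice-and-integrate step handles the $(\eps,\delta)$-max-KL case, and the product-measure identities $\tvd(P \otimes Q, P' \otimes Q) = \tvd(P,P')$ and additivity of KL over independent factors handle the other two. Your two flagged assumptions (independent coins for $\mech_1,\mech_2$, and that the neighbor relation activates only one guarantee at a time) are exactly the right points to make explicit, and they are implicit in the paper's statement as well.
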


\subsection{Memorization Scores \& Proxies}
\label{sec:proxies}

Memorization reflects how much a model relies on memorizing specific training examples instead of generalizing.

\begin{definition}[Label Memorization~\citep{feldman2020neural}]
For training point \( z_i = (x_i, y_i) \), the memorization score is:
\[
\mem(\mathcal{A}, \mathbf{z}, z_i) := \Pr_{h \leftarrow \mathcal{A}(\mathbf{z})}[h(x_i) = y_i] - \Pr_{h \leftarrow \mathcal{A}(\mathbf{z}^{\setminus z_i})}[h(x_i) = y_i].
\]
\label{def:feldman}
\end{definition}
Here, $h$ is the (randomized) classifier obtained from training algorithm $\mathcal{A}$ on dataset $\mathbf{z}$.

However, the first formulation of memorization proposed by Feldman and Zhang requires training thousands of models and is computationally prohibitive. \citet{zhao2024scalability} names two computationally efficient proxies with empirically high correlation with memorization that we describe below to use in our experiments. We describe an additional proxy per risk scores computed from a membership inference attack given by~\citet{song2021systematic}.

\noindent{\bf 1. Input Loss Curvature-based Proxies:} \citet{garg2023memorization} observe that data samples with high loss curvature visually correspond to long-tailed, mislabeled or conflicting samples, which are more likely to be memorized.

% \varun{$\ell$ is not defined}
\begin{definition}[Loss Curvature]
% ~\cite{garg2023memorization}]
The curvature of the loss function with respect to an input \( x_i \) is:
\[
\curv(\ell,i) := \text{Tr}\left( \nabla^2_x \ell(h_\theta(x_i), y_i) \right).
\]
Implicitly, $\theta$ is a hypothesis trained from randomized algorithm $\mathcal{A}$, and $h_\theta$ is the model output from parameters $\theta$. $\ell$ is the loss function used to train models in $\mathcal{A}$, e.g. cross entropy.

\end{definition}

% Garg observes a high cosine similarity on highly scoring samples using the Feldman and Zhang and curvature metric. \varun{so?}

\vspace{1mm}
\noindent{\bf 2. Learning Events-based Proxies:} The learning event proxies are first introduced by \citet{jiang2020characterizing}, a class of memorization score proxies designed to measure how quickly and reliably a model learns a specific example during training. The key intuition is that a specific example that is consistent with many others should be learned quickly as the gradient steps for all consistent examples should be well aligned. \citet{zhao2024scalability} find a strong link between these learning events proxies and memorization scores per the Spearman correlation coefficient. As done by prior work~\citep{jiang2020characterizing, zhao2024scalability}, we aggregate the following cumulative statistics over each training epoch: confidence, max confidence, entropy and binary correctness. Full details of this class of attacks are described in Appendix \ref{sec:attack-details}.

\begin{definition}[Learning Events Proxy]
% ~\citep{jiang2020characterizing}]
Given some per-sample event function $\phi(h_\theta(x), y)$ that depends on the learning hypothesis $\theta$ i.e. confidence, we define the cumulative training event proxy:
$$\texttt{event}(\mathcal{A},i,\phi)=\frac{1}{T}\sum_{t=1}^T\phi(h_{\theta_t}(x_i), y_i).$$
Here $\theta_t$ denotes the hypothesis learned at epoch $t$ in training algorithm $\mathcal{A}$. 
\end{definition}

\vspace{1mm}
\noindent{\bf 3. Membership Inference Attacks:} Membership Inference is a topic highly related with memorization. The goal of a membership inference attack is to recover whether a particular data entry was part of an unknown training set, either by using knowledge of the model, access to the model, or in a black-box setting. \citet{choi2023train} previously shows a theoretical link between Membership Inference advantage to memorization. Using a model trained on a certain dataset as the target model in an attack, we obtain risk scores for each data entry in the dataset---the probability of inclusion in the dataset---as a proxy for memorization scoring. The intuition comes from observing that highly memorized data entries are unlike representative data entries, and have higher probability of being identified by black-box model access. \citet{song2021systematic} give formulation for privacy risk score.

\begin{definition}[Privacy Risk Score]
% ~\citep{song2021systematic}]
    The privacy risk score $r$ of an input sample $z=(x,y)$ for the target machine learning model $h\leftarrow\mathcal{A}(\mathbf{z})$ is defined as the posterior probability that it is from the (random) set $\mathbf{z}$ after observing the target model's behavior over that sample denoted as $O(h, z)$, i.e.
    \[
        r(z)=\Pr(z\in\mathbf{z} \;\vert\; O(h,z)).
    \]
\end{definition}

These proxy scores are independent of each other, and each proxy operates on a different scale than the other, so the raw numbers are not meaningfully comparable. We give results comparing relative scoring in Appendix~\ref{sec:appendix-percentile}.

\section{Manipulating Memorization Scores}
\label{sec:approach}

We formally analyze the conditions under which data sellers can
manipulate memorization scores. The contrapositive of Theorem~\ref{thm:sensitive} is that
if the algorithm that is used to compute the memorization score is sufficiently
accurate, then the memorization will be high for a family of queries.
Our work builds on the theory of
stability notions in the literature~\citep{MAL-112, BNSSSU16, BousquetE02, Shalev-ShwartzSSS10}.

For our analysis, we note that \textit{we do not control how the algorithm $\mech$ is trained}! Thus, our adversarial
model is natural and affords the algorithm designer the ability to
respond to collusion by data sellers by modifying how
$\mech$ operates on the dataset.

\begin{tcolorbox}
We show that if the classification algorithm $\calA$ is 
very accurate on the population, then there \textit{always}
exist new examples from the data sellers that will lead to high
memorization scores.
\end{tcolorbox}

Let $q:\univ^n\rightarrow\univ$ be a query function. For any
fixed dataset $\sample$, the goal of the adversarial 
data seller is to consider if either $q(\sample)\cup \sample$ or $\sample$ leads to a
higher valuation score (via memorization).

For any algorithm $\mech$, we study the following question:
\textit{what queries on the dataset would lead to high memorization scores}?
In order to quantify the question,
we consider the following memorization score on addition of a new
example to the existing dataset:
\begin{equation}
    \mem(\mathcal{A}, \sample,q(\sample))\eqdef \Pr_{(x, y)\leftarrow q(\sample), h\leftarrow\mathcal{A}(\sample\cup q(\sample))}[h(x)=y]-\Pr_{(x, y)\leftarrow q(\sample), h\leftarrow\mech(\sample)}[h(x)=y]
    \label{eq:mem}
\end{equation}

In Equation~\ref{eq:mem}, what query functions
$q:\univ^n\rightarrow\univ$ would lead to high memorization?
We can measure sensitivity of a query
as $\max_{\sample, \sample'} \norm{q(\sample)-q(\sample')}$.

\begin{theorem}[See Theorem~\ref{thm:app-sensitive} in Appendix]
Let $Q_\Delta$ be a family of $\Delta$-sensitive queries on $\univ^n$.
Fix $\delta\in[0, 1]$ and let 
the dataset size $n\in\naturals$ be such that
there exists $\gamma > \delta$ such that $n \geq 1/\gamma$.
Then for any $\alpha, \beta\in (0, 1/10)$, there exists
algorithm $\mech$ with memorization score 
(i.e., $\mem(\mech, \sample, q(\sample)) \leq \delta$ from Equation~\ref{eq:mem}) of at most
$\delta$ such that $\mech$ is
$(\alpha, \beta)$-accurate for any query from $Q_\Delta$ but
it must be the case that
$\alpha \geq \gamma\Delta n$ and $\beta \geq \frac{\delta}{2\gamma}$.

That is, there  exists algorithm with
memorization score (Equation~\ref{eq:mem}) of at most $\delta$
such that for $\sample\getsr\dist^n$ and query $\query\in Q_\Delta$,
$\Pr[\norm{\query(\sample) - \query(\dist)} \geq \gamma\Delta n]\geq\frac{\delta}{2\gamma},$
where $\dist$ is a distribution over $\univ$.

\label{thm:sensitive}
\end{theorem}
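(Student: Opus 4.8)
The plan is to prove the equivalent ``That is'' formulation: it suffices to exhibit a single hard instance---a distribution $\dist$ over $\univ$, a $\Delta$-sensitive query $\query \in Q_\Delta$, and an algorithm $\mech$ with $\mem(\mech,\sample,\query(\sample)) \le \delta$---for which the empirical answer $\query(\sample)$ deviates from the population answer $\query(\dist)$ by at least $\gamma\Delta n$ with probability at least $\frac{\delta}{2\gamma}$ over $\sample \getsr \dist^n$. The accuracy bounds then follow: since the most accurate estimate of $\query(\dist)$ available to any mechanism that sees only $\sample$ is essentially $\query(\sample)$, an error of size $\ge \gamma\Delta n$ occurs on this event, forcing $\alpha \ge \gamma\Delta n$ and a failure probability $\beta \ge \frac{\delta}{2\gamma}$ in Definition~\ref{def:accuratemechanism}.

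I would produce the low-memorization mechanism first, as this is the cleanest ingredient. Take $\mech$ to be any $(0,\delta)$-max-KL stable learner (Definition~\ref{def:MKLstable} with $\eps = 0$). Applying the stability inequality to the single event $R = \{h : h(x)=y\}$ for the neighboring pair $(\sample,\, \sample \cup \query(\sample))$, which differ in one entry, bounds $\Pr_{h \gets \mech(\sample\cup\query(\sample))}[h(x)=y] \le \Pr_{h \gets \mech(\sample)}[h(x)=y] + \delta$; comparing against Equation~\ref{eq:mem} yields $\mem(\mech,\sample,\query(\sample)) \le \delta$ directly, and Lemma~\ref{lem:postprocessing_gen} guarantees this survives any post-processing used to read off $h(x)$. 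This realizes the existential ``there exists algorithm with memorization at most $\delta$'' clause while leaving us free to pick the most accurate such $\mech$.

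The core step is the anti-concentration construction. I would let $\univ = \{0,1\}$, set $\dist$ to be a Bernoulli distribution with a parameter $p$ to be tuned, and take the scaled counting query $\query(\sample) = \Delta \sum_{i\in[n]} \samp_i$, which is exactly $\Delta$-sensitive since flipping one coordinate changes it by $\Delta$. Then $\query(\dist) = \Delta n p$ and $\norm{\query(\sample)-\query(\dist)} = \Delta\,\lvert S - np\rvert$ with $S \sim \mathrm{Bin}(n,p)$, so the target event becomes $\lvert S - np\rvert \ge \gamma n$. Using the hypothesis $n \ge 1/\gamma$ (so the mean is at least one ``sensitivity unit''), I would tune $p$ so that a designated one-sided tail event carries deviation at least $\gamma n$ and has probability at least $\frac{\delta}{2\gamma}$; here $\gamma > \delta$ is exactly what keeps the target probability $\frac{\delta}{2\gamma}$ below $\tfrac12$, making it attainable by a single tail. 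Group stability (iterating Definition~\ref{def:MKLstable} with $\eps=0$ via Lemma~\ref{lem:comp_gen}) then certifies that the stable $\mech$ cannot track this fluctuation, so its answer is off by $\ge \gamma\Delta n$ precisely on the tail event.

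The main obstacle is matching the exact constants while staying in a feasible regime. Every $\query \in Q_\Delta$ obeys bounded-differences concentration, so $\Pr[\norm{\query(\sample)-\query(\dist)} \ge \gamma\Delta n] \le 2\exp(-2\gamma^2 n)$; the claimed lower bound $\frac{\delta}{2\gamma}$ is therefore only consistent when $\gamma$ and $n$ sit in the window pinned down by the hypotheses $\gamma > \delta$ and $\gamma n \ge 1$, and the delicate part of the calculation is choosing $p$ (and, if needed, $\gamma$ near $1/n$) so that the tail probability meets $\frac{\delta}{2\gamma}$ from below without contradicting this upper bound. A secondary subtlety is bridging the two notions of ``answer'': the memorization bound is phrased through the classification event $\{h(x)=y\}$, whereas the accuracy game measures $\norm{\query(\dist)-a}$, so I would argue that the stable mechanism's optimal query answer remains essentially data-independent on the tail event, transferring the count fluctuation into query error.
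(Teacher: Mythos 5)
Your first ingredient (the stability argument for the memorization bound) matches the paper: the paper likewise takes a $(0,\delta)$-max-KL stable mechanism and invokes post-processing/composition (Lemmas~\ref{lem:postprocessing_gen} and~\ref{lem:comp_gen}) to conclude $\mem(\mech,\sample,q(\sample))\le\delta$. The gap is in your core anti-concentration step. You propose a \emph{fixed, data-independent} query---the scaled Bernoulli count $\query(\sample)=\Delta\sum_{i}\samp_i$---and hope to tune $p$ so that $\Pr[\,|S-np|\ge\gamma n\,]\ge \delta/(2\gamma)$. This cannot work in the generality the theorem demands. As you yourself note, McDiarmid's inequality bounds the deviation probability of any fixed $\Delta$-sensitive query by $2\exp(-2\gamma^2 n)$, and the hypotheses ($\gamma>\delta$, $n\ge 1/\gamma$) place no upper bound on $n$: take $\delta=0.1$, $\gamma=1/2$, $n=10^6$, which satisfies every hypothesis; then your construction gives probability at most $2e^{-500000}$ while the theorem claims at least $\delta/(2\gamma)=0.1$. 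What you framed as ``the delicate part of the calculation'' (matching constants in a feasible window) is in fact a refutation of your approach: concentration of measure kills any fixed query outside the degenerate corner where $\gamma$ is on the order of $1/\sqrt{n}$, and $\gamma$ is fixed by the hypotheses---it is not yours to shrink toward $1/n$.

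The idea you are missing is that the query must be \emph{sample-dependent}: it is produced by the mechanism after seeing $\sample$, which is exactly what the accuracy game permits and what defeats concentration. The paper's proof partitions $\sample\in[0,1]^n$ into $\lceil 1/\gamma\rceil$ blocks of size $\gamma n$, runs on each block a $(0,\delta)$-stable subroutine $\calB$ that releases the block with probability $\delta$ (and otherwise outputs $\emptyset$), and defines the predicate $p(u)=\ind\{u\in\bigcup_i \hat{\sample}^{(i)}\}$ together with the query $q_p(\sample)=\Delta\sum_{j} p(\samp_j)$. Because the released points form a finite set and $\dist$ is atomless over the infinite universe $[0,1]$, the population value is exactly $q_p(\dist)=0$, while $q_p(\sample)\ge\gamma\Delta n$ as soon as at least one block is released, an event of probability $1-(1-\delta)^{\lceil 1/\gamma\rceil}\ge 1-e^{-\delta/\gamma}\ge\delta/(2\gamma)$. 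Note that this construction also rules out your choice of universe: on $\univ=\{0,1\}$ every distribution has atoms, so even a sample-dependent predicate cannot separate $q(\sample)$ from $q(\dist)$. Your secondary worry (bridging the classification event in the memorization score with the query error in the accuracy game) is handled no more formally by the paper than by you, but the central construction must be the adaptive, atomless-universe one.
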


The inverse query $q(z)=z^{-1}$ has sensitivity that approaches $\infty$ (i.e.,
let the input be non-invertible or be 0 for one-dimensional input). This motivates a subset of
our attacks (i.e., taking inverses of one or more examples).
More generally,
Theorem~\ref{thm:sensitive} implies that
in order to avoid manipulation by data sellers, the algorithm $\mech$ cannot
be too accurate on the population level or the algorithm $\mech$ must
(formally) satisfy stability guarantees, such as max-KL stability.

\paragraph{Full Details}
For the full details of our proofs and analysis, see Appendix~\ref{sec:theory}.
\section{Experimental Setup}
\label{sec:setup}

%A more detailed description is presented in Appendix~\ref{sec:additional-experiments}. 
    % \daniel{fix all the broken links.}

\vspace{1mm}\noindent{\bf Attacks:}
We evaluate the robustness of data valuation methods under four distinct input-space attacks, each modifying input \( x \) while preserving the label \( y \). The attacks are presented from low to high fidelity i.e., how closely the adversarially modified image represents the original. Previous work~\citep{xing2023you} demonstrates that low-fidelity images can have high utility in synthetic data, which justifies the consideration of our visually aberrated attack images. Our attacks are motivated by distributional shift, stability notions and decision boundary proximity. A detailed description of how these attacks are implemented is given in Appendix~\ref{sec:attack-details}.

\begin{squishenumerate}
    \item {\em No Attack:} Henceforth referred to as \texttt{None}, we use this label in our results section to denote unperturbed natural data samples.
    \item {\em Out-of-Distribution Replacement Attack:} Henceforth referred to as \texttt{OOD}, this attack replaces in-distribution samples with out-of-distribution inputs from related datasets (e.g., FashionMNIST, CIFAR-10), testing sensitivity to semantic shifts. Intuitively, a small spurt of data drawn from a separate latent distribution will be long-tailed and memorized.
    
    \item {\em Pseudoinverse Attack:} Henceforth referred to as \texttt{PINV}, this attack transforms input images by computing their Moore-Penrose pseudoinverses and normalizing to image-space ranges, inducing unnatural but structured distortions. Intuitively, by treating images as information signals for a model to parse, prior work~\citep{brown2021memorization} suggests that there
    are natural tasks for which any high-accuracy algorithm  would need  to
    memorize the majority of the samples. Motivated by our theoretical results, we take the pseudoinverse of an image matrix in order to result in memorization of the new image.
    
    \item {\em Naive EMD Attack:} Henceforth referred to as \texttt{EMD}, this attack maximizes the Wasserstein distance between original and perturbed images using a greedy per-pixel binary search heuristic over RGB intensities. Intuitively, we seek to maximize the distance between the original and perturbed image, treating image data as probability distributions.

    \item {\em DeepFool Perturbation Attack:} Henceforth referred to as \texttt{DF}, this attack applies the DeepFool algorithm~\citep{moosavi2016deepfool, abroshan2024superdeepfool} to perturb each input minimally toward the classifier's decision boundary, forcing misclassification. Intuitively, we might expect a data point close to the decision boundary to be memorized by the model.
\end{squishenumerate}

\vspace{1mm}\noindent{\bf Datasets:}
We conduct experiments on three canonical image classification datasets: MNIST~\citep{lecun1998gradient}, SVHN~\citep{netzer2011reading}, and CIFAR-10~\citep{krizhevsky2009learning}. These datasets span grayscale handwritten digits (MNIST), real-world digit photographs (SVHN), and natural scene object categories (CIFAR-10), and thus collectively test attribution methods across increasing levels of input complexity and semantic variation. We also present supplemental results over ImageNet~\citep{russakovsky2015imagenet} and AG News~\citep{zhang2015character} in Appendix~\ref{sec:appendix-vit}.

\vspace{1mm}\noindent{\bf Models:}
We evaluate three standard deep neural net architectures: VGG-11~\citep{simonyan2014very}, ResNet-18~\citep{he2016deep}, and MobileNet-v2~\citep{sandler2018mobilenetv2}. These models represent different design paradigms: convolutional (VGG), residual (ResNet), and mobile-efficient (MobileNet). Each model is trained using standard data augmentation and optimization techniques (SGD with cross-entropy loss), with architectures matched to dataset resolution where applicable. We also present supplemental results over Vision Transformer (ViT)~\citep{dosovitskiy2020image} and BERT~\citep{devlin2019bert} architecture in Appendix~\ref{sec:appendix-vit}.

\vspace{1mm}\noindent{\bf Experimental Setting:}
We evaluate on the three proxies described in \S~\ref{sec:proxies}.
A detailed description of scoring implementation is found in Appendix~\ref{sec:additional-experiments}. Each attack is evaluated over \( t = 5 \) independent trials. In each trial, a randomly chosen subset, which we call the \textit{attack set}, of data points from the base dataset is perturbed using the attack under consideration. We consider 3 sizes of attack sets: 10, 100, 1000. We then average the valuation scores of the perturbed samples and report the mean over trials as the final {\em attack score}.

\vspace{1mm}\noindent{\bf Hardware Used:}
All experiments were conducted using NVIDIA A40 single GPU nodes. Training and attack procedures were implemented in PyTorch. More details can be found in Appendix~\ref{sec:additional-experiments}.

\section{Experimental Results}
\label{sec:results}

We aim to answer the following questions: (1) What attack is the most effective in manipulating memorization scores?; (2) Is this attack effective across dataset and model architecture settings?

As a quick summary, we empirically validate the following claims: (1) The high sensitivity queries produced by \texttt{PINV} perturb memorization scores (and proxies) to a greater extent than other attacks we consider; (2) \texttt{PINV} outperforms other non-motivated attacks across datasets and model architectures, shown through extensive experimentation over the MNIST, SVHN and CIFAR-10 datasets using convolutional network architectures and limited results over higher resolution datasets and textual data over ImageNet and AG News dataset using transformer architectures.

We also find that as the size of the attack set reaches $10^4$, \texttt{PINV} appears to lose advantage over \texttt{OOD} and \texttt{EMD}. However \texttt{PINV} does not require knowledge of the underlying data distribution (in comparison to \texttt{OOD}) and remains more computationally efficient (in comparison to \texttt{EMD}). While one might wonder if such adversarial modifications degrade model generalization, we find that the test accuracy does not significantly decrease; full testing analysis is presented in Appendix~\ref{sec:additional-experiments}.

Memorization scoring and its proxies are inherently metrics which have dependence on both individual data samples and underlying dataset. To compare performance the performance of different attacks, we present our results in accordance to \textit{expected attack advantage} (EAA), our defined metric for comparing the memorization scoring perturbation capabilities for the attacks considered in our study. 

In particular, EAA captures the expected improvement of memorization scoring between an adversary's attack in comparison to their honest uncorrupted (no attack) data as a baseline, motivated by our threat model of an adversary without knowledge of underlying data distribution. \textit{Concretely, EAA captures the expected difference between scoring of data samples that are attacked and data samples that are produced by \texttt{None}}.

\noindent{\bf Note:} 
Appendix~\ref{sec:additional-experiments} contains more visualizations and results over different experimental settings, including higher resolution datasets, textual data and transformer architectures.

\subsection{Measuring Attack Effectiveness}

\newcommand{\pp}{$\pm$}

% \begin{table}[h!]
% \centering
% \resizebox{\textwidth}{!}{%
% \begin{tabular}{c|ccc|ccc|ccc}
% \toprule
% \textbf{Attack} & \multicolumn{3}{c|}{\textbf{Loss Curvature}} & \multicolumn{3}{c|}{\textbf{Confidence Event}} & \multicolumn{3}{c}{\textbf{Privacy Score}} \\
% \cmidrule(r){2-4} \cmidrule(r){5-7} \cmidrule(r){8-10}
% & MNIST & SVHN & CIFAR-10 & MNIST & SVHN & CIFAR-10 & MNIST & SVHN & CIFAR-10 \\
% \midrule
% \texttt{None} & 0.01\pp0.00 & 0.01\pp0.00 & 0.06\pp0.00 & 0.01\pp0.00 & 0.07\pp0.00 & 0.24\pp0.00 & 0.50\pp0.00 & 0.49\pp0.00 & 0.30\pp0.00 \\
% \midrule
% \texttt{OOD} & 0.07\pp0.00 & 0.01\pp0.00 & 0.25\pp0.00 & 0.80\pp0.00 & 0.61\pp0.00 & 0.63\pp0.00 & 0.36\pp0.00 & 0.06\pp0.00 & 0.49\pp0.00  \\
% \texttt{PINV} & 0.08\pp0.00 & \textbf{0.23\pp0.00} & \textbf{0.37\pp0.00} & \textbf{0.85\pp0.00} & \textbf{0.81\pp0.00} & \textbf{0.66\pp0.00} & \textbf{0.43\pp0.00} & \textbf{0.51\pp0.00} & \textbf{0.70\pp0.00}  \\
% \texttt{EMD} & \textbf{0.12\pp0.00} & 0.04\pp0.00 & 0.22\pp0.00 & 0.84\pp0.00 & 0.74\pp0.00 & 0.59\pp0.00 & 0.31\pp0.00 & 0.08\pp0.00 & 0.15\pp0.00 \\
% \texttt{DF} & 0.00\pp0.00 & 0.00\pp0.00 & 0.00\pp0.00 & 0.00\pp0.00 & 0.00\pp0.00 & 0.00\pp0.00 & 0.01\pp0.00 & 0.01\pp0.00 & 0.01\pp0.00 \\
% \bottomrule
% \end{tabular}%
% }
% \caption{\textbf{EAA on VGG-11 architecture with attack set size of 100.} \texttt{PINV} outperforms almost all other attacks by a significant margin across dataset and proxy.}
% \label{fig:vgg-100-table}
% \end{table}

\begin{table}[H]
\centering
\resizebox{\textwidth}{!}{%
\begin{tabular}{c|ccc|ccc|ccc}
\toprule
\textbf{Attack} & \multicolumn{3}{c|}{\textbf{Loss Curvature}} & \multicolumn{3}{c|}{\textbf{Confidence Event}} & \multicolumn{3}{c}{\textbf{Privacy Score}} \\
\cmidrule(r){2-4} \cmidrule(r){5-7} \cmidrule(r){8-10}
& MNIST & SVHN & CIFAR-10 & MNIST & SVHN & CIFAR-10 & MNIST & SVHN & CIFAR-10 \\
\midrule
\texttt{None} & 0.00\pp0.00 & 0.01\pp0.00 & 0.09\pp0.00 & 0.01\pp0.00 & 0.06\pp0.00 & 0.22\pp0.00 & 0.45\pp0.00 & 0.49\pp0.00 & 0.18\pp0.00 \\
\midrule
\texttt{OOD} & 0.15\pp0.00 & 0.02\pp0.00 & 0.20\pp0.00 & 0.51\pp0.00 & 0.48\pp0.00 & 0.58\pp0.01 & 0.06\pp0.01 & -0.12\pp0.00 & 0.10\pp0.00  \\
\texttt{PINV} & \textbf{0.20\pp0.00} & \textbf{0.35\pp0.00} & \textbf{0.25\pp0.00} & \textbf{0.67\pp0.01} & \textbf{0.79\pp0.00} & \textbf{0.64\pp0.00} & \textbf{0.30\pp0.02} & \textbf{0.46\pp0.00} & \textbf{0.58\pp0.01}  \\
\texttt{EMD} & 0.07\pp0.00 & 0.00\pp0.00 & -0.05\pp0.00 & 0.33\pp0.01 & 0.59\pp0.00 & 0.39\pp0.00 & -0.08\pp0.02 & -0.05\pp0.00 & -0.05\pp0.00 \\
\texttt{DF} & 0.00\pp0.00 & 0.01\pp0.00 & 0.00\pp0.00 & -0.01\pp0.00 & 0.01\pp0.00 & -0.03\pp0.00 & -0.02\pp0.01 & -0.03\pp0.00 & -0.02\pp0.00 \\
\bottomrule
\end{tabular}%
}
\caption{\textbf{EAA on ResNet-18 architecture with attack set size of 10.} \texttt{PINV} outperforms all other attacks by a significant margin across dataset and proxy.}
\label{fig:resnet-10-table}
\end{table}

\begin{table}[h!]
\centering
\resizebox{\textwidth}{!}{%
\begin{tabular}{c|ccc|ccc|ccc}
\toprule
\textbf{Attack} & \multicolumn{3}{c|}{\textbf{Loss Curvature}} & \multicolumn{3}{c|}{\textbf{Confidence Event}} & \multicolumn{3}{c}{\textbf{Privacy Score}} \\
\cmidrule(r){2-4} \cmidrule(r){5-7} \cmidrule(r){8-10}
& MNIST & SVHN & CIFAR-10 & MNIST & SVHN & CIFAR-10 & MNIST & SVHN & CIFAR-10 \\
\midrule
\texttt{None} & 0.00\pp0.00 & 0.01\pp0.00 & 0.09\pp0.00 & 0.01\pp0.00 & 0.06\pp0.00 & 0.23\pp0.00 & 0.47\pp0.00 & 0.49\pp0.00 & 0.19\pp0.00 \\
\midrule
\texttt{OOD} & 0.13\pp0.00 & 0.02\pp0.00 & \textbf{0.14\pp0.00} & 0.62\pp0.00 & 0.52\pp0.00 & 0.61\pp0.00 & 0.08\pp0.00 & -0.10\pp0.00 & 0.09\pp0.00  \\
\texttt{PINV} & \textbf{0.14\pp.00} & \textbf{0.14\pp0.00} & 0.08\pp0.00 & \textbf{0.85\pp0.00} & \textbf{0.81\pp0.00} & \textbf{0.66\pp0.00} & \textbf{0.29\pp0.01} & \textbf{0.51\pp0.00} & \textbf{0.79\pp0.00}  \\
\texttt{EMD} & 0.06\pp0.00 & 0.00\pp0.00 & -0.05\pp0.00 & 0.51\pp0.00 & 0.68\pp0.00 & 0.54\pp0.00 & -0.03\pp0.00 & -0.03\pp0.00 & 0.01\pp0.00 \\
\texttt{DF} & 0.00\pp0.00 & 0.00\pp0.00 & 0.01\pp0.00 & 0.00\pp0.00 & 0.00\pp0.00 & 0.00\pp0.00 & -0.02\pp0.00 & -0.01\pp0.00 & -0.02\pp0.00 \\
\bottomrule
\end{tabular}%
}
\caption{\textbf{EAA on ResNet-18 architecture with attack set of size 100.} \texttt{PINV} outperforms almost all other attacks by a significant margin across dataset and proxy.\vspace{-5mm}}
\label{fig:resnet-100-table}
\end{table}

\vspace{1mm}
Across our experimental settings, \texttt{PINV} outperforms the other attacks and produces samples that score highly relative to the base (unmodified) dataset. 
Tables~\ref{fig:resnet-10-table} \& \ref{fig:resnet-100-table} record the effectiveness of our four attacks on ResNet-18 models across difference choices of datasets and memorization scores and proxies (for an attack set sizes of 10 and 100). One can see the clear advantage of our theoretically motivated attack \texttt{PINV} in comparison to the other unmotivated attacks considered. 
% Table~\ref{fig:resnet-100-table} repeats the analysis, and shows the effectiveness on ResNet-18 models instead. %Each attack row tracks the induced change in score from uncorrupted datasets. 

\noindent{\bf An Explanation:} Our theoretical work suggests taking the inverse of an in-distribution sample strongly distinguishes it from the rest of the underlying dataset. We assert that it is the high sensitivity that causes the strongest scoring increase; additional experiments in Appendix~\ref{sec:additional-experiments} favorably compare \texttt{PINV} to random noise. On the other hand, while \texttt{DF} produces samples that are not meaningfully distinguishable from the underlying dataset, the attack is largely ineffective in perturbing memorization scoring. Based on Definition~\ref{def:feldman}, one might believe that points that are most likely to change the decision boundary (i.e., those that lie close to it) would be more memorized. Thus, we also consider attack sets comprising of points that lie close to the decision boundary. Results of the effectiveness of \texttt{DF} across different attack sets is presented in Appendix~\ref{sec:additional-experiments}. We find that boundary starting points marginally improves the performance of \texttt{DF}, but not enough to beat out \texttt{PINV} which does not assume any prior knowledge of the underlying dataset.
In line with our expectations, \texttt{OOD} performs well relative to baseline scoring. However, its performance compared with our \texttt{PINV} suggests that there is an underlying informational component to memorization that is not fully captured by distributional shifts.

\subsection{Label Memorization}

For completeness, we also provide a limited set of experiments for the effectiveness of the \texttt{OOD} and \texttt{PINV} attack on the more computationally intensive label memorization scoring per~\cite{feldman2020neural}. Unlike Feldman \& Zhang, we only use $n=100$ number of models trained per scoring run due to computational constraints. Each model is trained using the same process as described in Appendix~\ref{sec:additional-experiments}. We demonstrate that the high sensitivity queries produced by the \texttt{PINV} attack raise label memorization scoring in accordance with our theoretical claims.

\begin{table}[H]
\centering
\resizebox{\textwidth}{!}{%
\begin{tabular}{c|ccc|ccc|ccc}
\toprule
\textbf{Attack} & \multicolumn{3}{c|}{\textbf{MNIST}} & \multicolumn{3}{c|}{\textbf{SVHN}} & \multicolumn{3}{c}{\textbf{CIFAR10}} \\
\cmidrule(r){2-4} \cmidrule(r){5-7} \cmidrule(r){8-10}
Attack set size & 10 & 100 & 1000 & 10 & 100 & 1000 & 10 & 100 & 1000 \\
\midrule
\texttt{None} & 0.01\pp 0.00 & 0.01\pp 0.00 & 0.01\pp 0.00 & 0.08\pp 0.00 & 0.08\pp 0.00 & 0.08\pp 0.00 & 0.23\pp 0.00 & 0.23\pp 0.00 & 0.23\pp 0.00 \\
\midrule
\texttt{OOD} & 0.08\pp 0.00 & 0.02\pp 0.00 & 0.01\pp 0.00 & 0.04\pp 0.00 & 0.02\pp 0.00 & 0.01\pp 0.00 & 0.02\pp 0.00 & 0.02\pp 0.00 & 0.01\pp 0.00 \\
\texttt{PINV} & \textbf{0.62\pp 0.00} & \textbf{0.36\pp 0.00} & \textbf{0.20\pp 0.00} &\textbf{ 0.58\pp 0.00} & \textbf{0.45\pp 0.00} & \textbf{0.13\pp 0.00} &\textbf{ 0.43\pp 0.01} & \textbf{0.23\pp 0.00} & \textbf{0.09\pp 0.00} \\
\bottomrule
\end{tabular}%
}
\caption{\textbf{EAA for label memorization across dataset over VGG-11 architecture.} \texttt{PINV} raises label memorization score significantly over base scoring and \texttt{OOD} attack.\vspace{-5mm}}
\label{fig:label-memorization-vgg}
\end{table}

\begin{table}[H]
\centering
\resizebox{\textwidth}{!}{%
\begin{tabular}{c|ccc|ccc|ccc}
\toprule
\textbf{Attack} & \multicolumn{3}{c|}{\textbf{MNIST}} & \multicolumn{3}{c|}{\textbf{SVHN}} & \multicolumn{3}{c}{\textbf{CIFAR10}} \\
\cmidrule(r){2-4} \cmidrule(r){5-7} \cmidrule(r){8-10}
Attack set size & 10 & 100 & 1000 & 10 & 100 & 1000 & 10 & 100 & 1000 \\
\midrule
\texttt{None} & 0.01\pp 0.00 & 0.01\pp 0.00 & 0.01\pp 0.00 & 0.07\pp 0.00 & 0.07\pp 0.00 & 0.07\pp 0.00 & 0.10\pp 0.00 & 0.10\pp 0.00 & 0.11\pp 0.00 \\
\midrule
\texttt{OOD} & 0.06\pp 0.00 & 0.02\pp 0.00 & 0.01\pp 0.00 & 0.06\pp 0.00 & 0.01\pp 0.00 & 0.01\pp 0.00 & 0.03\pp 0.00 & 0.02\pp 0.00 & 0.01\pp 0.00 \\
\texttt{PINV} & \textbf{0.51\pp 0.00} & \textbf{0.36\pp 0.00} & \textbf{0.17\pp 0.00} & \textbf{0.47\pp 0.00} & \textbf{0.30\pp 0.00} & \textbf{0.04\pp 0.00} & \textbf{0.36\pp 0.01} & \textbf{0.20\pp 0.00} & \textbf{0.11\pp 0.00} \\
\bottomrule
\end{tabular}%
}
\caption{\textbf{EAA for label memorization across dataset over ResNet-18 architecture.} \texttt{PINV} raises label memorization score significantly over base scoring and \texttt{OOD} attack.\vspace{-5mm}}
\label{fig:label-memorization-resnet}
\end{table}

\begin{table}[H]
\centering
\resizebox{\textwidth}{!}{%
\begin{tabular}{c|ccc|ccc|ccc}
\toprule
\textbf{Attack} & \multicolumn{3}{c|}{\textbf{MNIST}} & \multicolumn{3}{c|}{\textbf{SVHN}} & \multicolumn{3}{c}{\textbf{CIFAR10}} \\
\cmidrule(r){2-4} \cmidrule(r){5-7} \cmidrule(r){8-10}
Attack set size & 10 & 100 & 1000 & 10 & 100 & 1000 & 10 & 100 & 1000 \\
\midrule
\texttt{None} & 0.01\pp 0.00 & 0.01\pp 0.00 & 0.01\pp 0.00 & 0.06\pp 0.00 & 0.06\pp 0.00 & 0.06\pp 0.00 & 0.16\pp 0.00 & 0.16\pp 0.00 & 0.16\pp 0.00 \\
\midrule
\texttt{OOD} & 0.04\pp 0.00 & 0.01\pp 0.00 & 0.01\pp 0.00 & 0.03\pp 0.00 & 0.02\pp 0.00 & 0.01\pp 0.00 & 0.01\pp 0.00 & 0.01\pp 0.00 & 0.01\pp 0.00 \\
\texttt{PINV} & \textbf{0.34\pp 0.00} & \textbf{0.34\pp 0.00} & \textbf{0.19\pp 0.00} & \textbf{0.20\pp 0.00} & \textbf{0.12\pp 0.00} & \textbf{0.04\pp 0.00} & \textbf{0.32\pp 0.01} & \textbf{0.18\pp 0.00} & \textbf{0.10\pp 0.0}0 \\
\bottomrule
\end{tabular}%
}
\caption{\textbf{EAA for label memorization across dataset over MobileNet-v2 architecture.} \texttt{PINV} raises label memorization score significantly over base scoring and \texttt{OOD} attack.\vspace{-5mm}}
\label{fig:label-memorization-mobile}
\end{table}

\subsection{Semantic Meaning}

It can be argued that images produced by the \texttt{PINV} attack have low semantic meaning and consequently will not be realistically considered in a data market setting after manual inspection. We respectfully believe that manual inspection alone provides limited insight into the actual contribution of a sample toward model performance and serves as a poor baseline for detecting adversarial samples. As shown in \citet{wang2018dataset}, there exist many training samples that may appear semantically uninformative or even irrelevant to a human observer (i.e., easy to detect and discard) but nonetheless prove useful for generalization. Thus, it would be premature or misleading to discard or downweight such samples solely based on visual or semantic intuition.

Additionally, in many settings where data valuation is most critical e.g., active data curation, responsible dataset pruning, or cost-sensitive training, it is likely that the model's performance depends on the cumulative influence of a large number of individually subtle or redundant examples. This further reduces the practical viability of relying on human judgment alone. Manual inspection does not scale, especially when dealing with high-volume datasets or when the goal is to precisely quantify marginal utility of each point for fine-grained optimization.

\subsection{Transformer Architecture}

While we focus on low resolution datasets and convolutional network architectures due to computational constraints of training models for our experiments, we assert that our main theory is agonistic of training algorithm and model architecture. We present a limited set of experiments over ImageNet data using ViT architectures in Appendix~\ref{sec:appendix-vit} showing that \texttt{PINV} efficiently manipulates memorization scores for any training algorithm. We also present a small experiment over text classification dataset AG News over BERT transformer architecture that demonstrates the effectiveness of inverse-based attacks across other data modalities such as text in Appendix~\ref{sec:appendix-bert}.

\section{Conclusion}
\label{sec:conclusion}

In our work we present a theoretical justification for our framework of memorization score manipulation, as well as a suite of experimental results that demonstrate the vulnerability of memorization scores and its proxies to adversarial manipulation. We empirically find that a simple and efficient attack of taking a scaled Pseudoinverse of in-distribution image data is sufficient to successfully produce images with high memorization scores. We also give a theoretical starting point for further analysis of adversarial manipulation of memorization scores and other data attribution methods.

% \input{contents/limitations}
% \input{appendix/ethics}

%%%%%%%%%%%%%%%%%%%%%%%%%%%%%%%%%%%%%%%%%%%%%%%%%%%%%%%%%%%%

\newpage
\bibliographystyle{unsrtnat}
\bibliography{references}

\clearpage
\appendix

\section*{Appendix}
% \input{contents/limitations}
% \newpage
\section{Theoretical Analysis}
\label{sec:theory}

Recall that the goal is to study the following question:
\textit{what queries on the dataset would lead to high memorization scores}?
In order to quantify the question,
we consider the following memorization score on addition of a new
example to the existing dataset:
\begin{equation}
    \mem(\mathcal{A}, \sample,q(\sample))\eqdef \Pr_{(x, y)\leftarrow q(\sample), h\leftarrow\mathcal{A}(\sample\cup q(\sample))}[h(x)=y]-\Pr_{(x, y)\leftarrow q(\sample), h\leftarrow\mech(\sample)}[h(x)=y]
    \label{eq:app-mem}
\end{equation}

It is natural to consider adding $k > 1$ examples so we also
consider the case where $k$ (adaptive)
queries on the existing dataset can be made:

\begin{align}
&\mem(\mathcal{A}, \sample, \{q_i(\sample)\}_{i=1}^k) \\
&\eqdef\Pr_{\{(x_i, y_i)\}_{i=1}^k\leftarrow \{q_i(\sample)\}_{i=1}^k, h\leftarrow\mathcal{A}(\sample\cup\{q_i(\sample)\}_{i=1}^k)}[\cap_{i=1}^k h(x_i)=y_i]\\
&-\Pr_{\{(x_i, y_i)\}_{i=1}^k\leftarrow \{q_i(\sample)\}_{i=1}^k, h\leftarrow\mech(\sample)}[\cap_{i=1}^k h(x_i)=y_i]
\end{align}

We begin by restating our main theorem:

\begin{theorem}
Let $Q_\Delta$ be a family of $\Delta$-sensitive queries on $\univ^n$.
Fix $\delta\in[0, 1]$ and let 
the dataset size $n\in\naturals$ be such that
there exists $\gamma > \delta$ such that $n \geq 1/\gamma$.

Then for any $\alpha, \beta\in (0, 1/10)$, there exists
algorithm $\mech$ with memorization score 
(i.e., $\mem(\mech, \sample, q(\sample)) \leq \delta$ from Equation~\ref{eq:mem}) of at most
$\delta$ such that $\mech$ is
$(\alpha, \beta)$-accurate for any query from $Q_\Delta$ but
it must be the case that
$\alpha \geq \gamma\Delta n$
and $\beta \geq \frac{\delta}{2\gamma}$.

That is, there  exists algorithm with
memorization score (Equation~\ref{eq:mem}) of at most $\delta$
such that for $\sample\getsr\dist^n$ and query $\query\in Q_\Delta$,
$$
\Pr[\norm{\query(\sample) - \query(\dist)} \geq \gamma\Delta n]\geq\frac{\delta}{2\gamma},
$$
where $\dist$ is a distribution over $\univ$.

\label{thm:app-sensitive}
\end{theorem}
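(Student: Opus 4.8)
My plan is to prove the statement in two largely independent pieces — a stability reduction that produces a concrete low-memorization algorithm $\mech$, and an anti-concentration bound on a single $\Delta$-sensitive query that forces the accuracy lower bound — and then to glue them together through Definition~\ref{def:accuratemechanism}.

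First I would realize $\mech$ as an $(\eps,\delta')$-max-KL stable algorithm and argue that stability controls the memorization score of Equation~\ref{eq:app-mem}. The two training sets in that definition, $\sample$ and $\sample\cup\query(\sample)$, differ in exactly one entry, so they are neighbors in the sense of Definition~\ref{def:MKLstable}. Since the event ``$h(x)=y$'' for the fixed added point $(x,y)=\query(\sample)$ is a deterministic post-processing of the hypothesis $h\gets\mech(\cdot)$, Lemma~\ref{lem:postprocessing_gen} together with the definition of max-KL stability gives
\[
\pr{h\gets\mech(\sample\cup\query(\sample))}{h(x)=y}\le e^{\eps}\,\pr{h\gets\mech(\sample)}{h(x)=y}+\delta'.
\]
Subtracting and bounding the surviving probability by $1$ yields $\mem(\mech,\sample,\query(\sample))\le (e^{\eps}-1)+\delta'$, which is driven below $\delta$ by taking the stability parameters small. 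This is the step that certifies ``memorization score at most $\delta$'' for a concrete $\mech$, and it is where the post-processing (and, for the $k>1$ case, composition) lemmas do the work.

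Second I would construct the hard instance witnessing the restated conclusion. Take $\univ=\{0,\Delta\}\subseteq\reals$, let $\dist$ be a fixed two-point distribution with $\Theta(\Delta)$ per-coordinate standard deviation, and let $\query(\sample)=\sum_{i\in[n]}\samp_i$, which is $\Delta$-sensitive. Then $\query(\sample)-\query(\dist)$ is a centered sum of $n$ independent bounded variables with standard deviation $\Theta(\Delta\sqrt{n})$. Choosing $\gamma=\Theta(1/\sqrt{n})$ — admissible since the hypotheses only require $\gamma>\delta$ and $n\ge 1/\gamma$, which hold in the meaningful regime $n\lesssim 1/\delta^2$ — makes the target deviation $\gamma\Delta n=\Theta(\Delta\sqrt{n})$ a constant fraction of one standard deviation. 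A Berry--Esseen or Paley--Zygmund anti-concentration estimate then yields
\[
\pr{\sample\getsr\dist^n}{\norm{\query(\sample)-\query(\dist)}\ge \gamma\Delta n}\ \ge\ c
\]
for an absolute constant $c$, and since $\tfrac{\delta}{2\gamma}=\Theta(\delta\sqrt{n})$ is small in this regime, the right-hand side dominates $\tfrac{\delta}{2\gamma}$, giving exactly the probability bound in the statement.

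Finally I would combine the pieces. Because $\mech$ sees only $\sample$ and the query family does not reveal $\query(\dist)$ in advance, a standard statistical (two-point / Le Cam) lower bound shows its answer $a$ cannot beat the empirical fluctuation: on the event above, $\norm{a-\query(\dist)}\ge\gamma\Delta n$ with probability at least $\tfrac{\delta}{2\gamma}$, so by Definition~\ref{def:accuratemechanism} any $(\alpha,\beta)$-accuracy guarantee must have $\alpha\ge\gamma\Delta n$ whenever $\beta<\tfrac{\delta}{2\gamma}$, which is the claimed tradeoff. The hard part will be this gluing step: I must argue that the stability/low-memorization constraint genuinely keeps $\mech$ from behaving like a clairvoyant constant predictor of $\query(\dist)$, and that the per-coordinate sensitivity $\Delta$ aggregates — via a group-privacy/telescoping argument over the $n$ changed entries — into the $\gamma\Delta n$ error scale rather than the naive $\Delta$. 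Calibrating the anti-concentration constant $c$ so that it uniformly dominates $\tfrac{\delta}{2\gamma}$ across the admissible range of $(\delta,\gamma,n)$ is the other delicate point, and it is what pins down the precise regime in which the theorem is non-vacuous.
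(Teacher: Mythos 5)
Your first step (max-KL stability plus post-processing controls the memorization score) matches the paper's proof: there, $\mech$ is built from a $(0,\delta)$-max-KL stable subroutine and Lemmas~\ref{lem:postprocessing_gen}--\ref{lem:comp_gen} give $\mem(\mech,\sample,\query(\sample))\le\delta$ directly, so your $(e^{\eps}-1)+\delta'$ variant of that calculation is fine. The genuine gap is in your hard instance. You take a \emph{fixed} query $\query(\sample)=\sum_i \samp_i$ and invoke anti-concentration, which can only produce deviations on the scale of one standard deviation, $\Theta(\Delta\sqrt{n})$. To reach the target $\gamma\Delta n$ you are forced to set $\gamma=\Theta(1/\sqrt{n})$, and for your absolute constant $c$ to dominate $\delta/(2\gamma)=\Theta(\delta\sqrt{n})$ you additionally need $\delta\lesssim 1/\sqrt{n}$. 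But the theorem claims the conclusion for \emph{every} admissible pair ($\gamma>\delta$, $n\ge 1/\gamma$), e.g. $\gamma=1/2$, $\delta=1/4$, and $n$ arbitrarily large; in that regime your target $\gamma\Delta n$ is $\gamma\sqrt{n}$ standard deviations away, and Hoeffding bounds the probability by $e^{-\Omega(\gamma^2 n)}$, exponentially below the required $\delta/(2\gamma)$. No fixed $\Delta$-sensitive i.i.d.-sum query can exhibit a linear-in-$n$ empirical--population gap with non-negligible probability, which is why your ``gluing'' step (Le Cam plus a group-privacy telescoping over the $n$ entries) is being asked to do work it cannot do.

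The idea you are missing is that the query witnessing the failure must itself be chosen as a function of the sample, by the stable mechanism. In the paper, $\mech$ splits $\sample$ into $\lceil 1/\gamma\rceil$ blocks of size $\gamma n$, releases each block independently with probability $\delta$ (the $(0,\delta)$-stable subroutine $\calB$), and outputs the query $\query_p(\cdot)=\Delta\sum_j p(\cdot_j)$, where $p$ is the membership predicate of the released points. Because $p$ is supported on finitely many points of an infinite universe, $\query_p(\dist)=0$; but whenever at least one block is released in full --- which happens with probability at least $1-(1-\delta)^{1/\gamma}\ge \delta/(2\gamma)$, using $\gamma>\delta$ --- the empirical value satisfies $\query_p(\sample)\ge\gamma\Delta n$. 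This yields the linear-in-$n$ gap with the stated probability for every admissible $\gamma$, while stability of the block-release (preserved under the disjoint composition of Lemma~\ref{lem:comp_gen} and post-processing) still caps the memorization score at $\delta$; the trade-off $\alpha\ge\gamma\Delta n$, $\beta\ge\delta/(2\gamma)$ then falls out immediately, with no Le Cam argument and no telescoping needed. As written, your route proves only the narrow regime $\gamma=\Theta(1/\sqrt{n})$, $\delta=O(1/\sqrt{n})$, not the theorem.
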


Our theory suggests that, in order
to avoid manipulation by data sellers, the algorithm $\mech$ cannot
be too accurate on the population level or the algorithm $\mech$ must
(formally) satisfy stability guarantees. e.g., max-KL stability which
can be achieved via the use of differential privacy.
Without such guarantees, one can effectively attack memorization scores. 
% Motivated by our formal analysis, we design inverse-based attacks to exploit the use of sensitive queries to increase memorization score of the new examples.
% TUE CHANGED THE ABOVE PARAGRAPH !!!

\paragraph{High-level Idea for Proof of Theorem~\ref{thm:app-sensitive}}
We will engineer a $\Delta$-sensitive
query that counts (at scale $\Delta$) how many sample points satisfy
a \textit{sample-dependent} predicate $p$. The predicate $p$ itself is
produced by a \textit{stable} $(0, \delta)$-max-KL procedure. We will
use the predicate to construct $\Delta$-sensitive queries 
(i.e., adding or removing a single record can change the induced query
value by at most $\Delta$). Intuitively, when $p$ is chosen from the
sample $\sample$, the value $q(\sample)$ is large ($\geq \gamma\Delta n$)
whereas the population value $q(\dist)$ remains 0. Stability keeps the
\textit{memorization score} (the advantage of re-training on
an added/removed example) at most $\delta$. And we prove that the 
empirical accuracy of such algorithm cannot, with high probability,
be close to the population accuracy.

\begin{figure}[h]
\begin{framed}
\begin{algorithmic}
\STATE{\textbf{Input:} 
A database $\sample\in[0,1]^n$. We think of $\sample$ as $\lceil\frac{1}{\gamma}\rceil$ databases of size $\gamma n$ each: $\sample = (\sample_1,\dots,\sample_{\lceil 1/\gamma \rceil})$.}
\STATE{\quad For $1\leq i\leq\lceil 1/\gamma \rceil$, let $\hat{\sample}_i=\calB(\sample_i)$.}
\STATE{\quad Let $p:[0,1]\rightarrow\{0,1\}$ where $p(z)=1$ iff $\exists i$ s.t.\ $z\in\hat{\sample}_i$.}
\STATE{\quad Define $\query_p:[0,1]^n\rightarrow\reals$ where $\query_p(\sample)=\Delta\sum_{z\in\sample}p(z)$ (note that $\query_p$ is a $\Delta$-sensitive query).}
\STATE{\textbf{Output:} $\query_p$.}
\end{algorithmic}
\end{framed}
\caption{$(0, \delta)$ max-KL stable algorithm $\mech$. \label{fig:zero-delta}}
\end{figure}

\begin{proof}[Proof of Theorem~\ref{thm:app-sensitive}]

First, we start with the following algorithm $\calB$ that is
$(0, \delta)$-max-KL stable (Definition~\ref{def:MKLstable}). 
$\calB$ is defined as follows:
\begin{itemize}
\item For any
database element $e$ input to $\calB$, 
output $e$
with probability $\delta$.
\item Otherwise, with probability $1-\delta$, output the empty database $\emptyset$ that is independent of the input to $\calB$.
\end{itemize}

Clearly, $\calB$ satisfies the definition of
$(0, \delta)$-max-KL stability.

Now, given a dataset $\sample\in[0,1]^n$, we partition the dataset
into $\lceil 1/\gamma \rceil$ disjoint blocks 
$\sample^{(1)}, \dots, \sample^{(\lceil 1/\gamma \rceil)}$, each of size $\gamma n$ (possible since $n \ge 1/\gamma$). 

Consider the stable mechanism $\mech$ defined in Figure~\ref{fig:zero-delta}:
on each block $\sample^{(i)}$, run $\calB$ and obtain output $\hat{\sample}^{(i)}$. 
Define a predicate $p:[0, 1] \to \{0,1\}$ by
\[
p(u) = \ind\left\{u \in \bigcup_{i\in \lceil 1/\gamma\rceil}\{\hat{\sample}^{(i)}\}\right\}.
\]
Now define the $\Delta$-sensitive query
\[
q_p(\sample) \;=\; \Delta \cdot \sum_{j=1}^n p(\sample_j).
\]
Since changing one coordinate alters the sum by at most $\Delta$, $q_p$ is indeed $\Delta$-sensitive. 
By Lemma~\ref{lem:comp_gen}, $q_p$ inherits the $(0,\delta)$-max-KL stability of $\calB$. Thus, $\mech$ satisfies $(0,\delta)$-max-KL stability since
$\calB$ is applied to disjoint databases.

Now let $\sample = (\sample_1,\dots,\sample_{\lceil 1/\gamma \rceil})$ contain i.i.d. samples from $\univ$, and consider the execution of $\calB$ on $\sample$.
Note that the predicate $p$ evaluates to 1 only on a finite number of points from $[0,1]$, an infinite universe. 
Therefore, 
$$\query_p(\dist)=\Delta\cdot n \cdot \pr{u\gets \dist}{p(z)=1} = 0.$$
Next, notice that $$\query_p(\sample)=\gamma\Delta n\cdot|\{ i : \hat{\sample}_i=\sample_i \}|.$$
Therefore, if there exists an $i$ s.t.\ $\hat{\sample}_i=\sample_i$ then $\query_p(\sample)- \query_p(\dist)\geq\gamma\Delta n$.
Since the probability of sampling any element in the database
is $\delta$, the probability that
$\query_p(\sample)- \query_p(\dist) <\gamma\Delta n$
is at most
$$
(1-\delta)^{1/\gamma}\leq e^{-\delta/\gamma} \leq 1-\frac{\delta}{2\gamma},
$$
for some $\gamma > \delta$.
Then, with probability at least $\frac{\delta}{2\gamma}$, algorithm $\mech$ outputs a $\Delta$-sensitive query $\query_p$ s.t.\ $\query_p(\sample)-\query_p(\dist)\geq\gamma\Delta n$.

We have already established that $\mech$ is $(0,\delta)$-max-KL stable.
Looking at Equation~\ref{eq:mem}, $\mech$ is trained with and without 
some element $q(\sample)$. The memorization score is computed as the
difference of post-processing when $\mech$ is trained with and without
the element. By post-processing (Lemma~\ref{lem:comp_gen}),
\textit{any} learned hypothesis $h$
will still differ in probabilities of classification on any input,
with and without training on $q(\sample)$,
by at most $\delta$. Thus,
$\mem(\mech, \sample, q(\sample)) \leq \delta$.

And we have also shown that
for $\sample\getsr\dist^n$ and query $\query\in Q_\Delta$,
$$
\Pr[\norm{\query(\sample) - \query(\dist)} \geq \gamma\Delta n]\geq\frac{\delta}{2\gamma},
$$
where $\dist$ is a distribution over $\univ$.
\end{proof}

We note that Max-KL stability is related to other notions of stability in the
literature, including KL-stability and TV-stability:

\begin{definition}[TV-Stability]
\label{def:TVstable}
Let $\mech \from \univ^n \to \calR$ be a randomized algorithm.  We say that $\mech$ is \emph{$\eps$-TV stable} if for every pair of samples that differ on exactly one element,
$$
\tvd(\mech(\sample), \mech(\sample')) = \sup_{R \subseteq \calR} \left| \pr{}{\mech(\sample) \in R} - \pr{}{\mech(\sample') \in R} \right| \leq \eps.
$$
\end{definition}

\begin{definition}[KL-Stability]
\label{def:KLstable}
Let $\mech \from \univ^n \to \calR$ be a randomized algorithm.  We say that $\mech$ is \emph{$\eps$-KL-stable} if for every pair of samples $\sample, \sample'$ that differ on exactly one element, 
$$
\ex{r \getsr \mech(\sample)}{\log\left( \frac{\pr{}{\mech(\sample) = r}}{\pr{}{\mech(\sample') = r}} \right)} \leq 2\eps^2
$$
\end{definition}

It can be shown that  $(\eps, \delta)$-max-KL stability implies $(2\eps + \delta)$-TV stability for $\eps\leq 1$ and $\delta > 0$~\citep{DworkRV10}. Because of
the relations between the stability notions, we have focused on one
such notion, the max-KL stability, and its properties.
\newpage
\section{Attack Details}
\label{sec:attack-details}

For each attack, we only change the input image $x$, while retaining its original label $y$, to retain class distributions in each dataset.

% \daniel{for each algorithm, you should explain what it does in at least a paragraph. e.g., Algorithm xxx implements the DeepFool attack of xxxx et al. $\hat{k}$ is a function that returns the class that example $x$ belongs to. }

\subsection{OOD Replacement Attack}

We first consider the robustness of data valuation to samples out-of-distribution of the original dataset. We use other known datasets to sample out-of-distribution images from. For MNIST, we attack using replacement samples from FashionMNIST. We attack SVHN with CIFAR-10 and vice-versa. The attack procedure is described in Algorithm~\ref{alg-ood}.

\begin{algorithm}[H]
\caption{OOD Replacement Attack}
    \begin{algorithmic}[1]
    \STATE \textbf{Input:} Image $x$ in dataset $S$, out-of-distribution dataset $S'$  
    \STATE \textbf{Output:} Image $\hat{x}$ sampled from $S'$
    \STATE
    \STATE Sample $x' \sim S'$
    \RETURN $\hat{x}\leftarrow x'$
    \end{algorithmic}
    \label{alg-ood}
\end{algorithm}

\subsection{Pseudoinverse Attack}

We treat each image as a matrix and take its pseudoinverse as the attack sample. To prevent vanishing gradient from pixel values being too close to 0 after the pseudoinverse operation, we multiply our perturbed images by a scalar factor to return images that can be processed by our neural networks. In PyTorch, images are converted to tensors in $[0,1]$, so our attack correspondingly scales the pseudoinverses to have unit normalization, described in Algorithm~\ref{alg-pinv}. For the RGB images of SVHN and CIFAR-10, we take the pseudoinverse of each channel and return the stacked pseudoinverses as our attack image. $x^+$ denotes the usual pseudoinverse operation in linear algebra.

% \varun{what is $x^+$ in the algorithm?}

\begin{algorithm}[H]
\caption{Pseudoinverse Perturbation Attack}
    \begin{algorithmic}[1]
    \STATE \textbf{Input:} Image $x$
    \STATE \textbf{Output:} Pseudoinverse $\hat{x}$ scaled to avoid vanishing gradient
    \STATE
    \RETURN $\hat{x}\leftarrow \frac{x^+}{||x^+||_1}$
    \end{algorithmic}
    \label{alg-pinv}
\end{algorithm}

\subsection{Naive Wasserstein Attack (Earth Mover's Distance)}

We aim to produce images that have maximal Wasserstein distance from the original image as possible. To reduce computational complexity, we implement a naive greedy binary search method (over the possible range of RGB values for images) that searches per pixel for the image with maximal Wasserstein distance, described in Algorithm~\ref{alg-naiveemd}. Note that $\mu_W(x,y)$ returns the Wasserstein distance between two images $x$ and $y$.

% \varun{what does $\hat{x}_{p\gets l}$ denote?}
$\hat{x}_{p\gets l}$ denotes the images $\hat{x}$ with pixel $p$ having value $l$,and $\hat{x}_{p\gets r}$ is similarly defined.

\begin{algorithm}[H]
\caption{Naive EMD Attack}
    \begin{algorithmic}[1]
    \STATE \textbf{Input:} Image $x$
    \STATE \textbf{Output:} Image $\hat{x}$ with high Wasserstein distance from original image
    \STATE
    \STATE Initialize $\hat{x}\leftarrow \mathbf{0}$
    \FOR{pixel $p$ in $x$}
        \STATE $l\leftarrow 0, r\leftarrow 255$
        \FOR{$i=1$ to 8}
            \IF{$\mu_W(\hat{x}_{p\gets l}, x) < \mu_W(\hat{x}_{p\gets r}, x)$}
                \STATE $l \leftarrow \frac{l+r}{2}$
            \ELSE
                \STATE $r \leftarrow \frac{l+r}{2}$
            \ENDIF
        \ENDFOR
        \STATE $\hat{x}\gets \hat{x}_{p\leftarrow l}$
    \ENDFOR
    \RETURN $\hat{x}$
    \end{algorithmic}
    \label{alg-naiveemd}
\end{algorithm}

\subsection{DeepFool Perturbation Attack}

% \varun{that's not his name? moosavi-dezfooli...}
\citet{moosavi2016deepfool} give an algorithm for DeepFool, an adversarial classification attack which perturbs an input image towards the decision boundary of a model to force misclassification. We attack data points by pushing them towards the decision boundary. The attack procedure is described in Algorithm~\ref{alg-deepfool2}, with \citet{moosavi2016deepfool}'s DeepFool subroutine described in Algorithm~\ref{alg-deepfool1}. We use overshoot parameter $\alpha=0.02$ as in the original DeepFool paper.

\begin{algorithm}[H]
\caption{DeepFool (Moosavi-Dezfooli et al.)}
    \begin{algorithmic}[1]
    \STATE \textbf{Input:} Image $x$, classifier $h$ which outputs class probabilities from an image, and $\hat{h}$ which outputs class labels from an image based on $h$
    \STATE \textbf{Output:} Perturbation $\hat{r}$ that pushes $x$ out of decision region.
    \STATE
    \STATE Initialize $x_0\leftarrow x, i\leftarrow 0$.
    \WHILE{$\hat{h}(x_i)=\hat{h}(x_0)$}
        \FOR{classes $k\neq \hat{h}(x_0)$}
            \STATE $\mathbf{w}'_k\leftarrow\nabla h_k(x_i)-\nabla h_{\hat{h}(x_0)}(x_i)$
            \STATE $h'_k\leftarrow h_k(x_i)-h_{\hat{h}(x_0)}(x_i)$
        \ENDFOR
    \STATE $\hat{l}\leftarrow\arg\min_{k\neq \hat{h}(x_0)}\frac{|h_k'|}{||\mathbf{w}'_k||_2}$
    \STATE $r_i\leftarrow\frac{|h'_{\hat{l}}|}{||\mathbf{w}'_{\hat{l}}||_2^2}\mathbf{w}_{\hat{l}}$
    \STATE $x_{i+1}\leftarrow x_i+r_i$
    \STATE $i\leftarrow i+1$
    \ENDWHILE
    \RETURN $\hat{r}=\sum_ir_i$
    \end{algorithmic}
    \label{alg-deepfool1}
\end{algorithm}

% \varun{the input to deepfool includes $\hat{h}$? that's missing in your algo} \tue{it's defined in the "Input"}
\begin{algorithm}[H]
\caption{Adversarial DeepFool Perturbation Attack}
    \begin{algorithmic}[1]
    \STATE \textbf{Input:} Image $x$ in dataset $S$, with training algorithm $\mathcal{A}$, overshoot $\alpha$
    \STATE \textbf{Output:} Image $\hat{x}$ such that model trained on $S$ with mislabel $y$ with minimal distance
    \STATE
    \STATE Train classifier $h\rightarrow\mathcal{A}(S)$
    \STATE $\hat{r}\leftarrow \text{DeepFool}(h,x)$.
    \RETURN $\hat{x}\leftarrow x+(1+\alpha)\hat{r}$
    \end{algorithmic}
    \label{alg-deepfool2}
\end{algorithm}

% \varun{respond to the questions below!}

% \begin{itemize}
% \item Datasets considered?
% \item Models considered?
% \item notions of memorization/influence considered? what approximations did you make?
% \item attacks considered?
% \item how many trials for each run?
% \item on what hardware the experiments were done?
% \end{itemize}

\newpage
\section{Additional Experimental Details}
\label{sec:additional-experiments}

%\subsection{Detailed Experimental Setup}

The codebase for our experiments is presented at \url{https://github.com/tuedo2/MemAttack}.

\subsection{Training Setup}

Over all models we use the SGD optimizer with learning rate of $10^{-2}$ and momentum 0.9. We use a batch size of 512 to speed up training. For models trained on MNIST we train for 5 epochs, and for the other two datasets SVHN and CIFAR10 we train for 10 epochs. All models are trained using the CrossEntropy loss function. All experiments were conducted using NVIDIA A40 single GPU nodes, and all training and attack procedures were implemented in PyTorch. Our architectures and training algorithms are not state-of-the-art since state-of-the-art training is significantly more
computationally intensive.

\subsection{Proxy Implementation Details}

\subsubsection{Loss Curvature}

For the Loss Curvature calculation, we take the curvature after epoch 2 for MNIST and after epoch 5 for SVHN and CIFAR10. We follow the code from \citet{ravikumar2024unveiling} which uses Hutchinson's trace estimator to approximate the Hessian product of randomly sampled Radamacher vectors for trace estimation, consistent with~\citet{garg2023memorization}.

\subsubsection{Learning Events}

We give a detailed description of the four learning events below, following~\citet{jiang2020characterizing,zhao2024scalability}. Each is implicitly evaluated for any data sample $(x,y)$.
\begin{enumerate}
    \item \textbf{Confidence:} The softmax probability of $h(x)$ corresponding to ground truth label $y$.
    \item \textbf{Max Confidence:} The highest softmax probability of $h(x)$ across all classes.
    \item \textbf{Entropy:} The entropy of the output probabilities of $h(x)$.
    \item \textbf{Binary Correctness:} Indicator of whether the model correctly predicts $y$ for $x$ (0 or 1).
\end{enumerate}
For each event, we evaluate each data point after each training data point, and then take the mean of the event across all epochs to return as our final learning event score.

\subsubsection{Privacy Risk Score}

We closely adhere to~\citet{song2021systematic} implementation for membership inference. We train our shadow models on the test split of each dataset, and then evaluate the membership of a target model trained over the training set (which is perturbed by our attacks) to calculate a score for each training data sample. The precise score comes from comparing histograms of of shadow and target models.

\subsection{Robustness of Attacks to Training Setting}

\noindent{\bf Significance of Dataset:} Figure~\ref{fig:proxy-resnet-all} compares the effectiveness of our four attacks over confidence event scoring on the MNIST data set across all model architectures we consider. This metric best showcases the advantage of \texttt{PINV}. Similar plots that demonstrate the resilience of \texttt{PINV} across underlying dataset are presented in Appendix~\ref{sec:additional-experiments}. We also note that the other attacks seem to exhibit similar trends across dataset, suggesting that memorization-based scoring is resilient for evaluating data for neural networks. Across all datasets, most attacks see a monotonic increase of scoring as the size of attack set increases. We observe that most attacks display their strongest performance over CIFAR10. Our plot suggests that more complex datasets are more susceptible to attack.
% \varun{what mem metric are you using and why?} 
% \varun{what do you want the reader to takeaway? note -- this takeaway should be soemthing new and not what is already present in 5.1} 
%\varun{is this takeaway going to be the same for other datasets and privacy metrics? if so, where can we find those graphs?}

\begin{figure}[h!]
    \centering
    \begin{subcaptionbox}{MNIST}[0.3\textwidth]
        {\includegraphics[width=\linewidth]{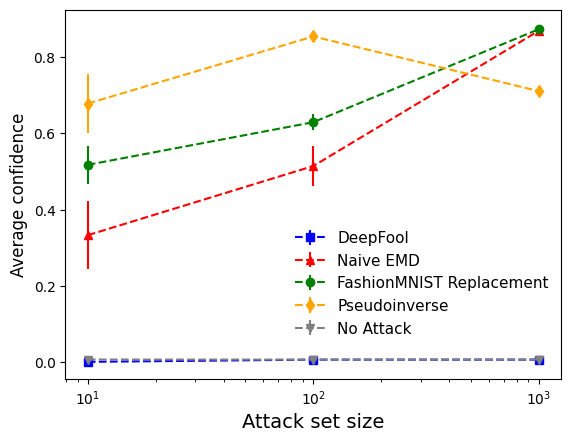}}
    \end{subcaptionbox}
    \hfill
    \begin{subcaptionbox}{SVHN}[0.3\textwidth]
        {\includegraphics[width=\linewidth]{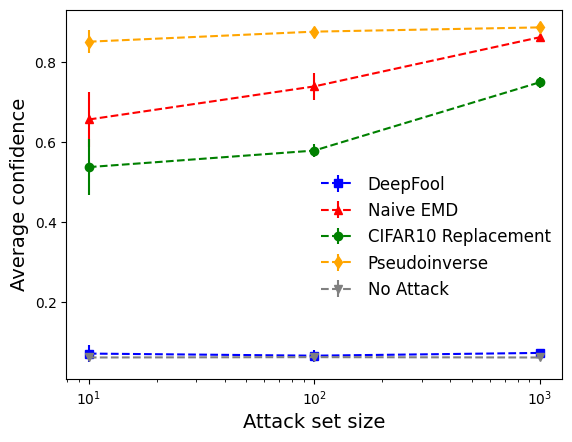}}
    \end{subcaptionbox}
    \hfill
    \begin{subcaptionbox}{CIFAR-10}[0.3\textwidth]
        {\includegraphics[width=\linewidth]{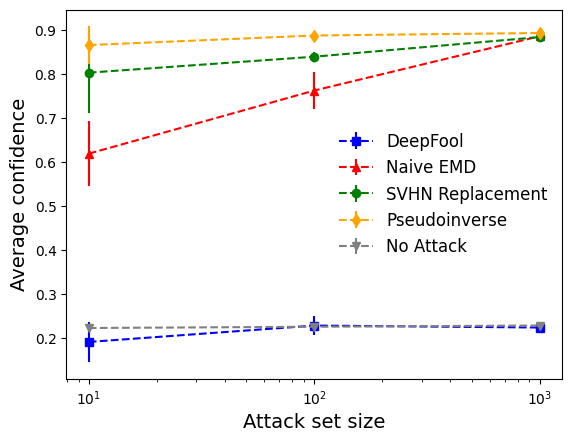}}
    \end{subcaptionbox}
    \caption{\textbf{Confidence event scoring across datasets.} \texttt{PINV} outperforms other attacks across datasets.}
    \label{fig:proxy-resnet-all}
\end{figure}

% \daniel{can you make the label and legend font sizes larger using matplotlib?}

\noindent{\bf Significance of Architecture:} 
% Tables~\ref{fig:vgg-100-table} and~\ref{fig:resnet-100-table} demonstrate aggregate metrics. To present results in more detail, 
Figure~\ref{fig:proxy-mnist-all} compares the effectiveness of our four attacks on just the ResNet-18 architecture across all benchmark datasets we consider using the over confidence event proxy metric.
% This metric best showcases the advantage of \texttt{PINV}. 
Similar plots that demonstrate the resilience of \texttt{PINV} across model architecture are presented in Appendix~\ref{sec:additional-experiments}. With the exception of \texttt{PINV}, we again observe a monotonic increase in attack performance with attack subset size across all architectures. On the VGG-11 models, \texttt{PINV} holds a weaker advantage compared to the other models. Although the relative ordering is preserved, we still observe some variance in scoring advantage across model architectures.

\begin{figure}[h!]
    \centering
    \begin{subcaptionbox}{VGG-11}[0.3\textwidth]
        {\includegraphics[width=\linewidth]{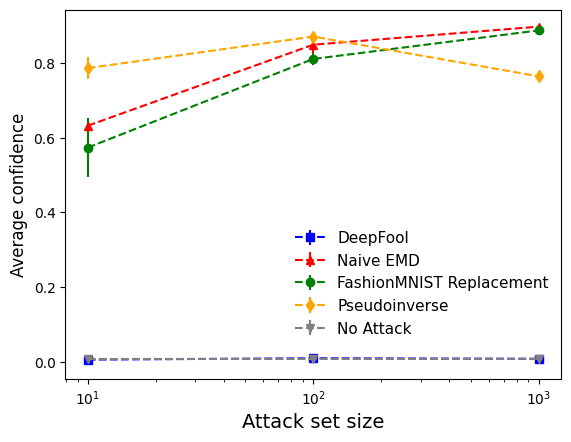}}
    \end{subcaptionbox}
    \hfill
    \begin{subcaptionbox}{ResNet-18}[0.3\textwidth]
        {\includegraphics[width=\linewidth]{assets/proxy/mnist_resnet.png}}
    \end{subcaptionbox}
    \hfill
    \begin{subcaptionbox}{MobileNet-v2}[0.3\textwidth]
        {\includegraphics[width=\linewidth]{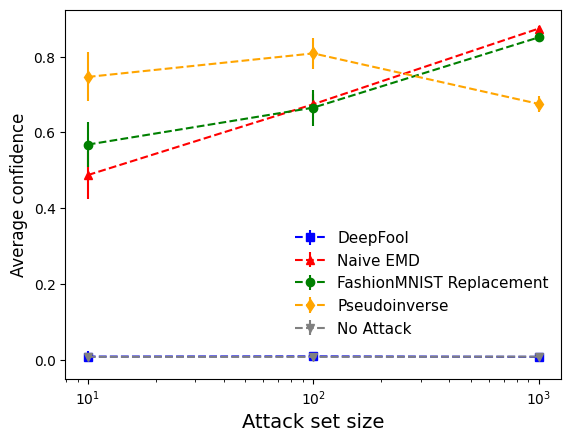}}
    \end{subcaptionbox}
    \caption{\textbf{Confidence event scoring across model architectures.} \texttt{PINV} outperforms other attacks across model architectures.}
    \label{fig:proxy-mnist-all}
\end{figure}

\subsection{Performance of Attacks Across Memorization Proxy}

Figure~\ref{fig:cifar-vgg-all} compares the effectiveness of our four attacks on a VGG-11 model trained on the CIFAR-10 dataset across all three memorization proxies we consider. Similar plots tracking \texttt{PINV} over different scoring metrics are found in Appendix~\ref{sec:additional-experiments}. Interestingly, curvature score does not exhibit the monotonic increase in performance observed in our other plots. We also find confidence event scoring to be the most susceptible to attack across all metrics considered. While we observe high variance in attack performance across metrics, the relative rankings of the attacks remain the same.

% \varun{what do you want the reader to takeaway? note -- this takeaway should be soemthing new and not what is already present in 5.1} 

\begin{figure}[h!]
    \centering
    \begin{subcaptionbox}{Curvature}[0.3\textwidth]
        {\includegraphics[width=\linewidth]{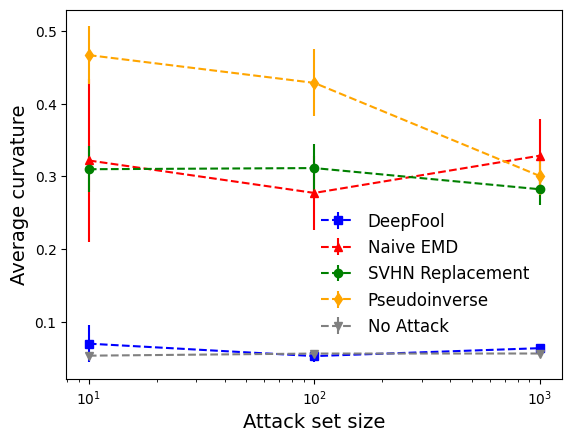}}
    \end{subcaptionbox}
    \hfill
    \begin{subcaptionbox}{Confidence}[0.3\textwidth]
        {\includegraphics[width=\linewidth]{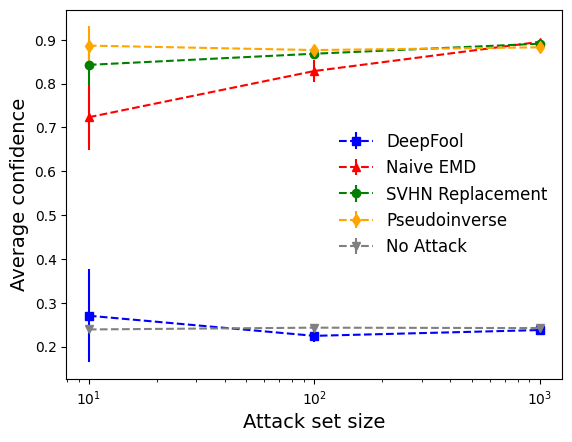}}
    \end{subcaptionbox}
    \hfill
    \begin{subcaptionbox}{Privacy Score}[0.3\textwidth]
        {\includegraphics[width=\linewidth]{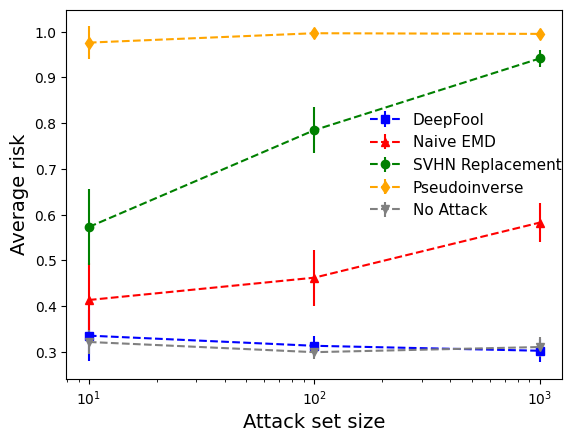}}
    \end{subcaptionbox}
    \caption{\textbf{Proxy scores on VGG-11 model architectures.} \texttt{PINV} outperforms other attacks across memorization proxies}
    \label{fig:cifar-vgg-all}
\end{figure}

\subsection{Additional Comparison Tables}

\subsubsection{VGG-11 Results}

Tables~\ref{fig:vgg-10-table},~\ref{fig:vgg-100-table},~\ref{fig:vgg-1000-table} present the mean and variance of our four attacks on the VGG-11 architecture across all of our experimental settings. As discussed in \S~\ref{sec:results}, \texttt{PINV} outperforms other attacks but loses its advantage when the attack size reaches $10^4$. Although \texttt{PINV} is observed to lose out on its top spot in certain settings, we still observe it to consistently produce perturbed points with high advantage over non-attack points. \texttt{PINV} performs the best in the SVHN dataset under the VGG-11 architecture.

\begin{table}[H]
\centering
\resizebox{\textwidth}{!}{%
\begin{tabular}{c|ccc|ccc|ccc}
\toprule
\textbf{Attack} & \multicolumn{3}{c|}{\textbf{Loss Curvature}} & \multicolumn{3}{c|}{\textbf{Confidence Event}} & \multicolumn{3}{c}{\textbf{Privacy Score}} \\
\cmidrule(r){2-4} \cmidrule(r){5-7} \cmidrule(r){8-10}
& MNIST & SVHN & CIFAR-10 & MNIST & SVHN & CIFAR-10 & MNIST & SVHN & CIFAR-10 \\
\midrule
\texttt{None} & 0.00\pp0.00 & 0.01\pp0.00 & 0.05\pp0.00 & 0.01\pp0.00 & 0.07\pp0.00 & 0.24\pp0.00 & 0.50\pp0.00 & 0.49\pp0.00 & 0.32\pp0.00 \\
\midrule
\texttt{OOD} & 0.11\pp0.00 & 0.01\pp0.00 & 0.25\pp0.00 & 0.57\pp0.01 & 0.52\pp0.01 & 0.61\pp0.00 & 0.20\pp0.00 & 0.05\pp0.01 & 0.26\pp0.00  \\
\texttt{PINV} & 0.11\pp0.00 & \textbf{0.17\pp0.00} & \textbf{0.41\pp0.00} & \textbf{0.78\pp0.01} & \textbf{0.77\pp0.00} & \textbf{0.65\pp0.00} & \textbf{0.33\pp0.01} & \textbf{0.51\pp0.00} & \textbf{0.65\pp0.00}  \\
\texttt{EMD} & \textbf{0.17\pp0.00} & 0.04\pp0.00 & 0.27\pp0.00 & 0.62\pp0.00 & 0.65\pp0.00 & 0.48\pp0.01 & 0.06\pp0.01 & 0.06\pp0.00 & 0.11\pp0.01 \\
\texttt{DF} & 0.00\pp0.00 & 0.00\pp0.00 & 0.01\pp0.00 & 0.00\pp0.00 & 0.04\pp0.00 & 0.03\pp0.01 & 0.01\pp0.00 & 0.03\pp0.00 & 0.03\pp0.00 \\
\bottomrule
\end{tabular}%
}
\caption{\textbf{EAA on VGG-11 architecture with attack set size of 10.} \texttt{PINV} outperforms almost all other attacks by a significant margin across dataset and proxy.}
\label{fig:vgg-10-table}
\end{table}

\begin{table}[H]
\centering
\resizebox{\textwidth}{!}{%
\begin{tabular}{c|ccc|ccc|ccc}
\toprule
\textbf{Attack} & \multicolumn{3}{c|}{\textbf{Loss Curvature}} & \multicolumn{3}{c|}{\textbf{Confidence Event}} & \multicolumn{3}{c}{\textbf{Privacy Score}} \\
\cmidrule(r){2-4} \cmidrule(r){5-7} \cmidrule(r){8-10}
& MNIST & SVHN & CIFAR-10 & MNIST & SVHN & CIFAR-10 & MNIST & SVHN & CIFAR-10 \\
\midrule
\texttt{None} & 0.01\pp0.00 & 0.01\pp0.00 & 0.06\pp0.00 & 0.01\pp0.00 & 0.07\pp0.00 & 0.24\pp0.00 & 0.50\pp0.00 & 0.49\pp0.00 & 0.30\pp0.00 \\
\midrule
\texttt{OOD} & 0.07\pp0.00 & 0.01\pp0.00 & 0.25\pp0.00 & 0.80\pp0.00 & 0.61\pp0.00 & 0.63\pp0.00 & 0.36\pp0.00 & 0.06\pp0.00 & 0.49\pp0.00  \\
\texttt{PINV} & 0.08\pp0.00 & \textbf{0.23\pp0.00} & \textbf{0.37\pp0.00} & \textbf{0.85\pp0.00} & \textbf{0.81\pp0.00} & \textbf{0.66\pp0.00} & \textbf{0.43\pp0.00} & \textbf{0.51\pp0.00} & \textbf{0.70\pp0.00}  \\
\texttt{EMD} & \textbf{0.12\pp0.00} & 0.04\pp0.00 & 0.22\pp0.00 & 0.84\pp0.00 & 0.74\pp0.00 & 0.59\pp0.00 & 0.31\pp0.00 & 0.08\pp0.00 & 0.15\pp0.00 \\
\texttt{DF} & 0.00\pp0.00 & 0.00\pp0.00 & 0.00\pp0.00 & 0.00\pp0.00 & 0.00\pp0.00 & 0.00\pp0.00 & 0.01\pp0.00 & 0.01\pp0.00 & 0.01\pp0.00 \\
\bottomrule
\end{tabular}%
}
\caption{\textbf{EAA on VGG-11 architecture with attack set size of 100.} \texttt{PINV} outperforms almost all other attacks by a significant margin across dataset and proxy.}
\label{fig:vgg-100-table}
\end{table}

\begin{table}[H]
\centering
\resizebox{\textwidth}{!}{%
\begin{tabular}{c|ccc|ccc|ccc}
\toprule
\textbf{Attack} & \multicolumn{3}{c|}{\textbf{Loss Curvature}} & \multicolumn{3}{c|}{\textbf{Confidence Event}} & \multicolumn{3}{c}{\textbf{Privacy Score}} \\
\cmidrule(r){2-4} \cmidrule(r){5-7} \cmidrule(r){8-10}
& MNIST & SVHN & CIFAR-10 & MNIST & SVHN & CIFAR-10 & MNIST & SVHN & CIFAR-10 \\
\midrule
\texttt{None} & 0.00\pp0.00 & 0.01\pp0.00 & 0.06\pp0.00 & 0.01\pp0.00 & 0.07\pp0.00 & 0.24\pp0.00 & 0.51\pp0.00 & 9,49\pp0.00 & 0.31\pp0.00 \\
\midrule
\texttt{OOD} & 0.03\pp0.00 & 0.01\pp0.00 & 0.21\pp0.00 & 0.88\pp0.00 & 0.75\pp0.00 & \textbf{0.65\pp0.00} & \textbf{0.50\pp0.00} & 0.15\pp0.00 & 0.64\pp0.00  \\
\texttt{PINV} & \textbf{0.06\pp0.00} & \textbf{0.09\pp0.00} & 0.24\pp0.00 & 0.76\pp0.00 & \textbf{0.81\pp0.00} & 0.64\pp0.00 & 0.42\pp0.00 & \textbf{0.51\pp0.00} & \textbf{0.68\pp0.00}  \\
\texttt{EMD} & \textbf{0.06\pp0.00} & 0.07\pp0.00 & \textbf{0.28\pp0.00} & \textbf{0.89\pp0.00} & 0.80\pp0.00 & \textbf{0.65\pp0.00} & \textbf{0.50\pp0.00} & 0.15\pp0.01 & 0.27\pp0.00 \\
\texttt{DF} & 0.00\pp0.00 & 0.00\pp0.00 & 0.00\pp0.00 & 0.00\pp0.00 & 0.01\pp0.00 & 0.00\pp0.00 & 0.00\pp0.00 & 0.00\pp0.00 & 0.00\pp0.00 \\
\bottomrule
\end{tabular}%
}
\caption{\textbf{EAA on VGG-11 architecture with attack set size of 1000.} \texttt{OOD}, \texttt{PINV}, \texttt{EMD} all perform similarly across dataset and proxy.}
\label{fig:vgg-1000-table}
\end{table}

\subsubsection{ResNet-18 Results}

Tables~\ref{fig:resnet-10-table},~\ref{fig:resnet-100-table},~\ref{fig:resnet-1000-table} present the mean and variance of our four attacks on the ResNet-18 architecture across all of our experimental settings. We again observe that \texttt{PINV} outperforms other attacks but slightly loses its advantage when the attack size reaches $10^4$. Compared to the results over the VGG-11 architecture, it appears that \texttt{PINV} displays more consistent top performances over the ResNet-18 model. We again observe that \texttt{PINV} performs relatively the best in the SVHN dataset.

\begin{table}[H]
\centering
\resizebox{\textwidth}{!}{%
\begin{tabular}{c|ccc|ccc|ccc}
\toprule
\textbf{Attack} & \multicolumn{3}{c|}{\textbf{Loss Curvature}} & \multicolumn{3}{c|}{\textbf{Confidence Event}} & \multicolumn{3}{c}{\textbf{Privacy Score}} \\
\cmidrule(r){2-4} \cmidrule(r){5-7} \cmidrule(r){8-10}
& MNIST & SVHN & CIFAR-10 & MNIST & SVHN & CIFAR-10 & MNIST & SVHN & CIFAR-10 \\
\midrule
\texttt{None} & 0.01\pp0.00 & 0.01\pp0.00 & 0.09\pp0.00 & 0.01\pp0.00 & 0.06\pp0.00 & 0.23\pp0.00 & 0.53\pp0.00 & 0.49\pp0.00 & 0.21\pp0.00 \\
\midrule
\texttt{OOD} & 0.07\pp0.00 & 0.02\pp0.00 & \textbf{0.04\pp0.00} & \textbf{0.86\pp0.00} & 0.68\pp0.00 & 0.65\pp0.00 & 0.51\pp0.01 & \textbf{0.49\pp0.00} & \textbf{0.23\pp0.00}  \\
\texttt{PINV} & \textbf{0.10\pp0.00} & \textbf{0.03\pp0.00} & 0.02\pp0.00 & 0.70\pp0.00 & \textbf{0.82\pp0.00} & \textbf{0.66\pp0.00} & \textbf{0.53\pp0.00} & \textbf{0.49\pp0.00} & 0.21\pp0.00  \\
\texttt{EMD} & 0.03\pp0.00 & 0.00\pp0.00 & -0.04\pp0.00 & \textbf{0.86\pp0.00} & 0.80\pp0.00 & \textbf{0.66\pp0.00} & 0.52\pp0.00 & \textbf{0.49\pp0.00} & 0.18\pp0.00 \\
\texttt{DF} & 0.00\pp0.00 & 0.00\pp0.00 & 0.00\pp0.00 & 0.00\pp0.00 & 0.01\pp0.00 & 0.00\pp0.00 & 0.52\pp0.00 & 0.48\pp0.00 & 0.21\pp0.00 \\
\bottomrule
\end{tabular}%
}
\caption{\textbf{EAA on ResNet-18 architecture with attack set size of 1000.} \texttt{OOD}, \texttt{PINV}, \texttt{EMD} all perform similarly across dataset and proxy.}
\label{fig:resnet-1000-table}
\end{table}

\subsubsection{MobileNet-v2 Results}

Tables~\ref{fig:mobile-10-table},~\ref{fig:mobile-100-table},~\ref{fig:mobile-1000-table} present the mean and variance of our four attacks on the MobileNetV2 architecture across all of our experimental settings. We find that over the MobileNetV2 architecture, \texttt{PINV} displays the most resilience with the Loss Curvature proxy. Furthermore, \texttt{PINV} remains a strong performer across the other proxies.

\begin{table}[H]
\centering
\resizebox{\textwidth}{!}{%
\begin{tabular}{c|ccc|ccc|ccc}
\toprule
\textbf{Attack} & \multicolumn{3}{c|}{\textbf{Loss Curvature}} & \multicolumn{3}{c|}{\textbf{Confidence Event}} & \multicolumn{3}{c}{\textbf{Privacy Score}} \\
\cmidrule(r){2-4} \cmidrule(r){5-7} \cmidrule(r){8-10}
& MNIST & SVHN & CIFAR-10 & MNIST & SVHN & CIFAR-10 & MNIST & SVHN & CIFAR-10 \\
\midrule
\texttt{None} & 0.00\pp0.00 & 0.01\pp0.00 & 0.12\pp0.00 & 0.01\pp0.00 & 0.09\pp0.00 & 0.33\pp0.00 & 0.45\pp0.02 & 0.49\pp0.00 & 0.47\pp0.00 \\
\midrule
\texttt{OOD} & 0.10\pp0.00 & 0.03\pp0.01 & 0.27\pp0.00 & 0.56\pp0.00 & 0.65\pp0.00 & \textbf{0.54\pp0.00} & \textbf{0.59\pp0.00} & \textbf{0.50\pp0.00} & 0.41\pp0.00  \\
\texttt{PINV} & \textbf{0.22\pp0.00} & \textbf{0.19\pp0.00} & \textbf{0.46\pp0.00} & \textbf{0.74\pp0.00} & \textbf{0.77\pp0.00} & 0.51\pp0.00 & 0.45\pp0.02 & 0.49\pp0.00 & \textbf{0.47\pp0.00}  \\
\texttt{EMD} & 0.04\pp0.00 & 0.01\pp0.00 & -0.04\pp0.00 & 0.48\pp0.00 & 0.73\pp0.00 & 0.51\pp0.00 & 0.51\pp0.02 & 0.49\pp0.00 & 0.40\pp0.01 \\
\texttt{DF} & 0.00\pp0.00 & 0.00\pp0.00 & -0.03\pp0.00 & 0.00\pp0.00 & 0.00\pp0.00 & -0.06\pp0.00 & 0.50\pp0.00 & 0.49\pp0.00 & \textbf{0.47\pp0.02} \\
\bottomrule
\end{tabular}%
}
\caption{\textbf{EAA on MobileNet-v2 architecture with attack set size of 10.} \texttt{PINV} outperforms almost all other attacks by a slight margin across dataset and proxy.}
\label{fig:mobile-10-table}
\end{table}

\begin{table}[H]
\centering
\resizebox{\textwidth}{!}{%
\begin{tabular}{c|ccc|ccc|ccc}
\toprule
\textbf{Attack} & \multicolumn{3}{c|}{\textbf{Loss Curvature}} & \multicolumn{3}{c|}{\textbf{Confidence Event}} & \multicolumn{3}{c}{\textbf{Privacy Score}} \\
\cmidrule(r){2-4} \cmidrule(r){5-7} \cmidrule(r){8-10}
& MNIST & SVHN & CIFAR-10 & MNIST & SVHN & CIFAR-10 & MNIST & SVHN & CIFAR-10 \\
\midrule
\texttt{None} & 0.00\pp0.00 & 0.01\pp0.00 & 0.11\pp0.00 & 0.01\pp0.00 & 0.09\pp0.00 & 0.33\pp0.00 & 0.46\pp0.01 & 0.48\pp0.00 & 0.40\pp0.01 \\
\midrule
\texttt{OOD} & 0.10\pp0.00 & 0.02\pp0.00 & 0.22\pp0.00 & 0.66\pp0.00 & 0.62\pp0.00 & 0.53\pp0.00 & 0.53\pp0.02 & \textbf{0.50\pp0.00} & \textbf{0.43\pp0.02}  \\
\texttt{PINV} & \textbf{0.15\pp0.00} & \textbf{0.16\pp0.00} & \textbf{0.35\pp0.00} & \textbf{0.80\pp0.00} & \textbf{0.77\pp0.00} & \textbf{0.55\pp0.00} & 0.46\pp0.01 & 0.48\pp0.00 & 0.40\pp0.00  \\
\texttt{EMD} & 0.03\pp0.00 & 0.01\pp0.00 & -0.04\pp0.00 & 0.67\pp0.00 & \textbf{0.77\pp0.00} & 0.54\pp0.00 & \textbf{0.61\pp0.00} & \textbf{0.50\pp0.00} & \textbf{0.43\pp0.01} \\
\texttt{DF} & 0.00\pp0.00 & 0.00\pp0.00 & 0.01\pp0.00 & 0.00\pp0.00 & 0.00\pp0.00 & 0.02\pp0.00 & 0.60\pp0.01 & \textbf{0.50\pp0.00} & 0.37\pp0.00 \\
\bottomrule
\end{tabular}%
}
\caption{\textbf{EAA on MobileNet-v2 architecture with attack set size of 100.} \texttt{OOD}, \texttt{PINV}, \texttt{EMD} all perform similarly across dataset and proxy.}
\label{fig:mobile-100-table}
\end{table}

\begin{table}[H]
\centering
\resizebox{\textwidth}{!}{%
\begin{tabular}{c|ccc|ccc|ccc}
\toprule
\textbf{Attack} & \multicolumn{3}{c|}{\textbf{Loss Curvature}} & \multicolumn{3}{c|}{\textbf{Confidence Event}} & \multicolumn{3}{c}{\textbf{Privacy Score}} \\
\cmidrule(r){2-4} \cmidrule(r){5-7} \cmidrule(r){8-10}
& MNIST & SVHN & CIFAR-10 & MNIST & SVHN & CIFAR-10 & MNIST & SVHN & CIFAR-10 \\
\midrule
\texttt{None} & 0.01\pp0.00 & 0.01\pp0.00 & 0.12\pp0.00 & 0.01\pp0.00 & 0.09\pp0.00 & 0.33\pp0.00 & 0.40\pp0.01 & 0.52\pp0.00 & 0.41\pp0.02 \\
\midrule
\texttt{OOD} & 0.08\pp0.00 & 0.02\pp0.00 & 0.11\pp0.00 & 0.84\pp0.00 & 0.72\pp0.00 & \textbf{0.55\pp0.00} & 0.40\pp0.01 & 0.50\pp0.00 & 0.34\pp0.01  \\
\texttt{PINV} & \textbf{0.14\pp0.00} & \textbf{0.07\pp0.00} & \textbf{0.28\pp0.01} & 0.67\pp0.00 & \textbf{0.79\pp0.00} & 0.54\pp0.00 & 0.40\pp0.01 & \textbf{0.52\pp0.00} & 0.41\pp0.01  \\
\texttt{EMD} & 0.03\pp0.00 & 0.00\pp0.00 & -0.05\pp0.00 & \textbf{0.87\pp0.00} & \textbf{0.79\pp0.00} & \textbf{0.55\pp0.00} & \textbf{0.64\pp0.01} & 0.50\pp0.00 & 0.34\pp0.01 \\
\texttt{DF} & 0.00\pp0.00 & 0.00\pp0.00 & 0.00\pp0.00 & 0.00\pp0.00 & 0.01\pp0.00 & -0.01\pp0.00 & 0.44\pp0.00 & 0.49\pp0.00 & \textbf{0.44\pp0.01} \\
\bottomrule
\end{tabular}%
}
\caption{\textbf{EAA on MobileNet-v2 architecture with attack set size of 1000.} \texttt{OOD}, \texttt{PINV}, \texttt{EMD} all perform similarly across dataset and proxy.}
\label{fig:mobile-1000-table}
\end{table}

\subsection{Additional Comparison Plots}

\subsubsection{Curvature Proxy Plots}

Comparison plots of the four attacks for the Loss Curvature proxy across dataset and model architecture are presented in Figure~\ref{fig:3x3grid-curvature}. Unlike the other proxies, the absolute value of attack curvature decreases as the size of the attack set increases in \texttt{PINV}. The other attacks seem more resilient to this monotonic decrease, which follows our observations in \S~\ref{sec:results} where we assert that \texttt{PINV} loses its advantage as the attack size increases. With the exception of 7a., we see that \texttt{PINV} demonstrates superiority over the other attacks across all training settings over the Loss Curvature proxy.

\begin{figure}[H]
    \centering
    
    % First row
    \begin{subfigure}[b]{0.3\textwidth}
        \centering
        \includegraphics[width=\textwidth]{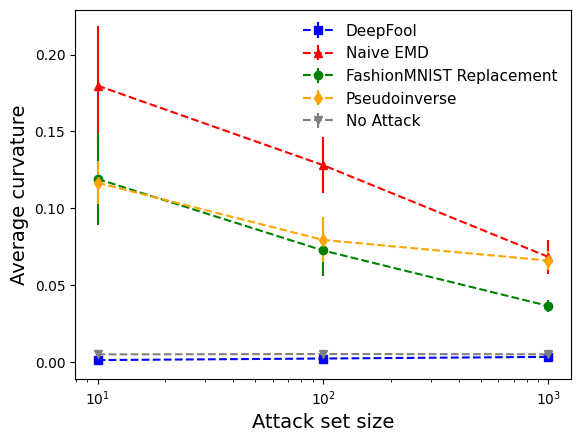}
        \caption{VGG-11 on MNIST}
    \end{subfigure}
    \hfill
    \begin{subfigure}[b]{0.3\textwidth}
        \centering
        \includegraphics[width=\textwidth]{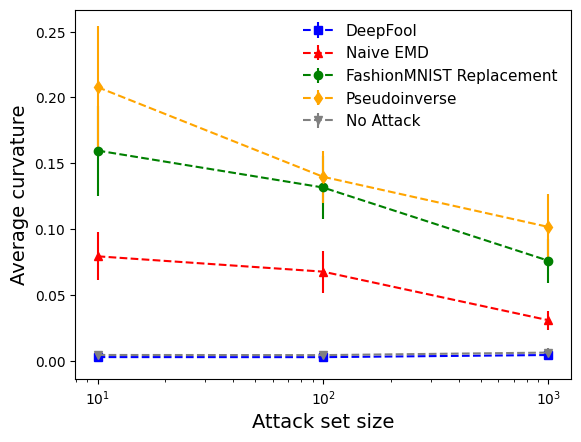}
        \caption{ResNet-18 on MNIST}
    \end{subfigure}
    \hfill
    \begin{subfigure}[b]{0.3\textwidth}
        \centering
        \includegraphics[width=\textwidth]{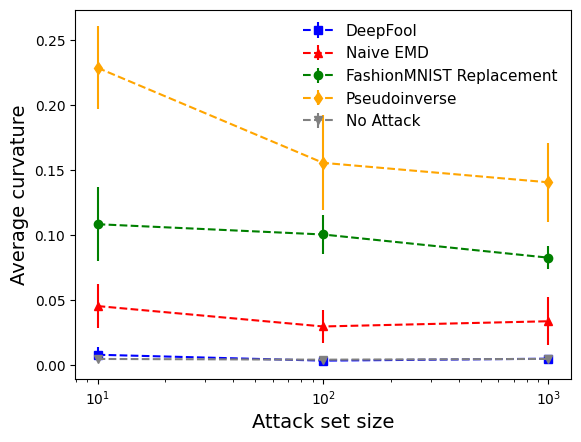}
        \caption{MobileNet-v2 on MNIST}
    \end{subfigure}
    
    \vspace{1em}
    
    % Second row
    \begin{subfigure}[b]{0.3\textwidth}
        \centering
        \includegraphics[width=\textwidth]{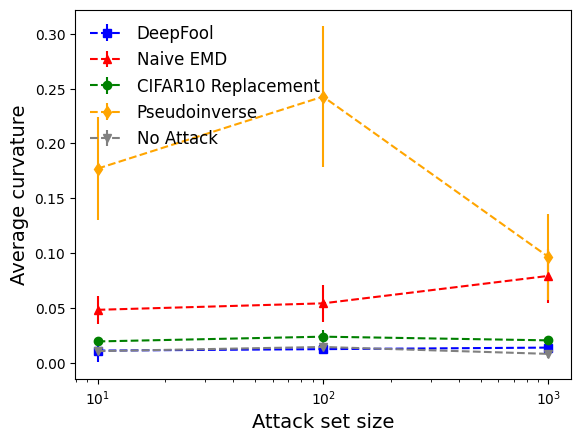}
        \caption{VGG-11 on SVHN}
    \end{subfigure}
    \hfill
    \begin{subfigure}[b]{0.3\textwidth}
        \centering
        \includegraphics[width=\textwidth]{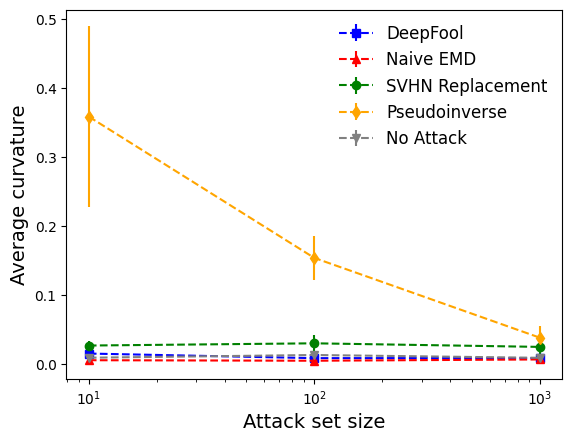}
        \caption{ResNet-18 on SVHN}
    \end{subfigure}
    \hfill
    \begin{subfigure}[b]{0.3\textwidth}
        \centering
        \includegraphics[width=\textwidth]{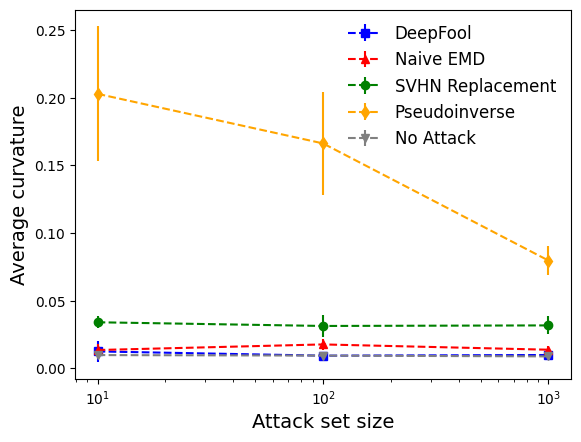}
        \caption{MobileNet-v2 on SVHN}
    \end{subfigure}
    
    \vspace{1em}
    
    % Third row
    \begin{subfigure}[b]{0.3\textwidth}
        \centering
        \includegraphics[width=\textwidth]{assets/curvature/cifar10_vgg.png}
        \caption{VGG-11 on CIFAR10}
    \end{subfigure}
    \hfill
    \begin{subfigure}[b]{0.3\textwidth}
        \centering
        \includegraphics[width=\textwidth]{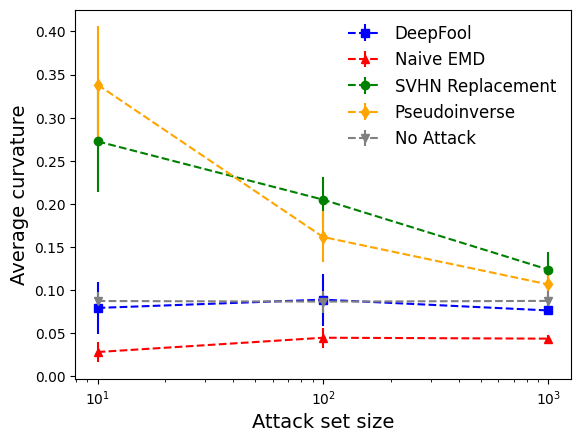}
        \caption{ResNet-18 on CIFAR10}
    \end{subfigure}
    \hfill
    \begin{subfigure}[b]{0.3\textwidth}
        \centering
        \includegraphics[width=\textwidth]{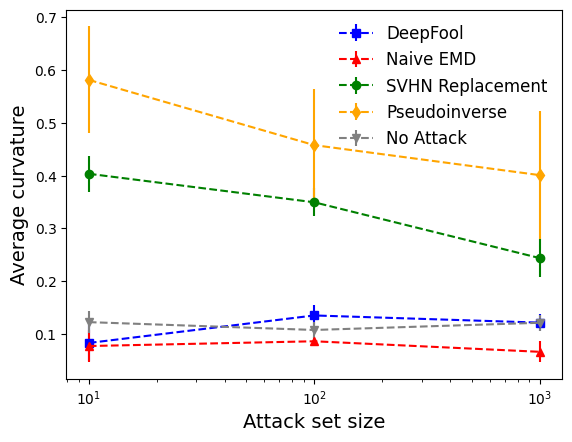}
        \caption{MobileNet-v2 on CIFAR10}
    \end{subfigure}

    \caption{\textbf{Attack comparison plots for Loss Curvature.} \texttt{PINV} outperforms other attacks across dataset and model architecture.}
    \label{fig:3x3grid-curvature}
\end{figure}

\newpage
\subsubsection{Confidence Proxy Plots}

Comparison plots of the four attacks for the Confidence Event proxy across dataset and model architecture are presented in Figure~\ref{fig:3x3grid-proxy}. With some exceptions at attack size $10^4$, we see that \texttt{PINV} strongly outperforms the other attacks in the Confidence Event proxy across all other experimental parameters. We also note that the performance of all four attacks seem consistent across attack set size, which differs from our observations in the other two proxies.

\begin{figure}[H]
    \centering
    
    % First row
    \begin{subfigure}[b]{0.3\textwidth}
        \centering
        \includegraphics[width=\textwidth]{assets/proxy/mnist_vgg.png}
        \caption{VGG-11 on MNIST}
    \end{subfigure}
    \hfill
    \begin{subfigure}[b]{0.3\textwidth}
        \centering
        \includegraphics[width=\textwidth]{assets/proxy/mnist_resnet.png}
        \caption{ResNet-18 on MNIST}
    \end{subfigure}
    \hfill
    \begin{subfigure}[b]{0.3\textwidth}
        \centering
        \includegraphics[width=\textwidth]{assets/proxy/mnist_mobile.png}
        \caption{MobileNet-v2 on MNIST}
    \end{subfigure}
    
    \vspace{1em}
    
    % Second row
    \begin{subfigure}[b]{0.3\textwidth}
        \centering
        \includegraphics[width=\textwidth]{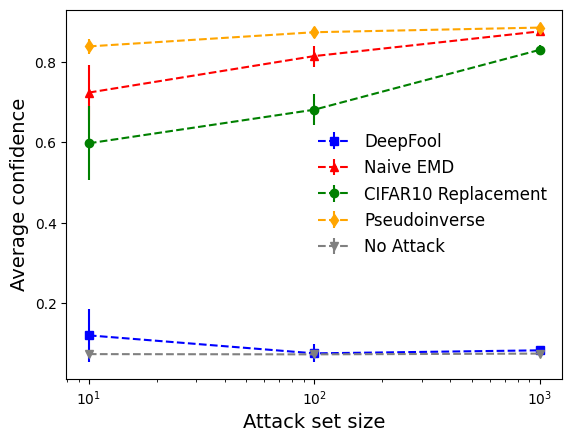}
        \caption{VGG-11 on SVHN}
    \end{subfigure}
    \hfill
    \begin{subfigure}[b]{0.3\textwidth}
        \centering
        \includegraphics[width=\textwidth]{assets/proxy/svhn_resnet.png}
        \caption{ResNet-18 on SVHN}
    \end{subfigure}
    \hfill
    \begin{subfigure}[b]{0.3\textwidth}
        \centering
        \includegraphics[width=\textwidth]{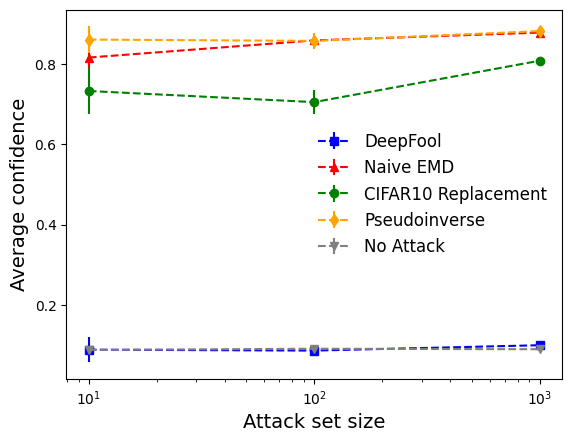}
        \caption{MobileNet-v2 on SVHN}
    \end{subfigure}
    
    \vspace{1em}
    
    % Third row
    \begin{subfigure}[b]{0.3\textwidth}
        \centering
        \includegraphics[width=\textwidth]{assets/proxy/cifar10_vgg.png}
        \caption{VGG-11 on CIFAR10}
    \end{subfigure}
    \hfill
    \begin{subfigure}[b]{0.3\textwidth}
        \centering
        \includegraphics[width=\textwidth]{assets/proxy/cifar10_resnet.png}
        \caption{ResNet-18 on CIFAR10}
    \end{subfigure}
    \hfill
    \begin{subfigure}[b]{0.3\textwidth}
        \centering
        \includegraphics[width=\textwidth]{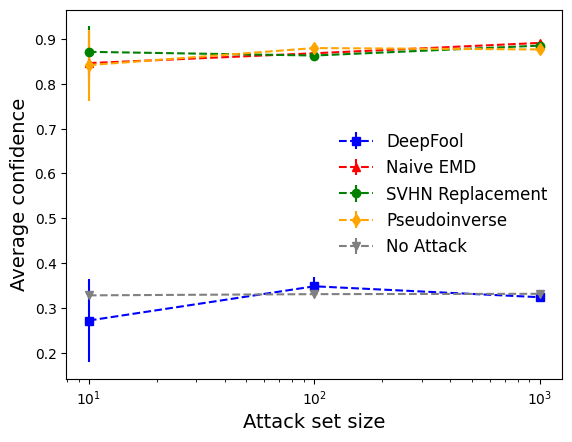}
        \caption{MobileNet-v2 on CIFAR10}
    \end{subfigure}

    \caption{\textbf{Attack comparison plots for Confidence Event.} \texttt{PINV} outperforms other attacks across dataset and model architecture.}
    \label{fig:3x3grid-proxy}
\end{figure}

\newpage
\subsubsection{Privacy Score Proxy Plots}

Comparison plots of the four attacks for the Risk Score proxy across dataset and model architecture are presented in Figure~\ref{fig:3x3grid-mia}. Here we see again that \texttt{PINV} is a consistently strong performer across all experimental parameters. Outside of MNIST, \texttt{PINV} solidly trumps all other attacks. We observe that while MobileNetV2 admits a wider range of performances for the other attacks in comparison to the other architectures, but none consistently scores higher than \texttt{PINV}.

\begin{figure}[H]
    \centering
    
    % First row
    \begin{subfigure}[b]{0.3\textwidth}
        \centering
        \includegraphics[width=\textwidth]{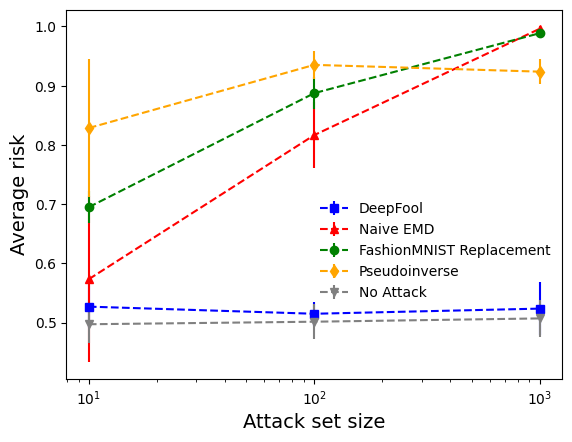}
        \caption{VGG-11 on MNIST}
    \end{subfigure}
    \hfill
    \begin{subfigure}[b]{0.3\textwidth}
        \centering
        \includegraphics[width=\textwidth]{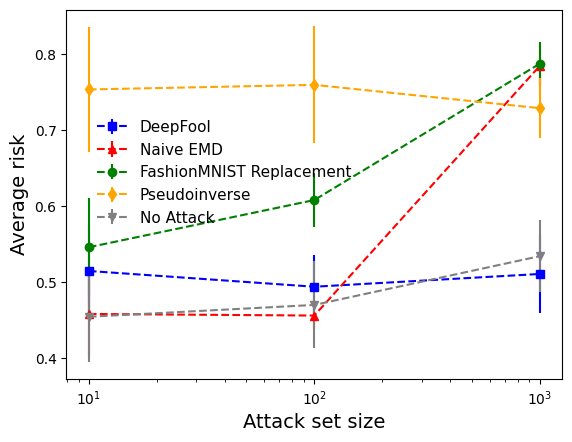}
        \caption{ResNet-18 on MNIST}
    \end{subfigure}
    \hfill
    \begin{subfigure}[b]{0.3\textwidth}
        \centering
        \includegraphics[width=\textwidth]{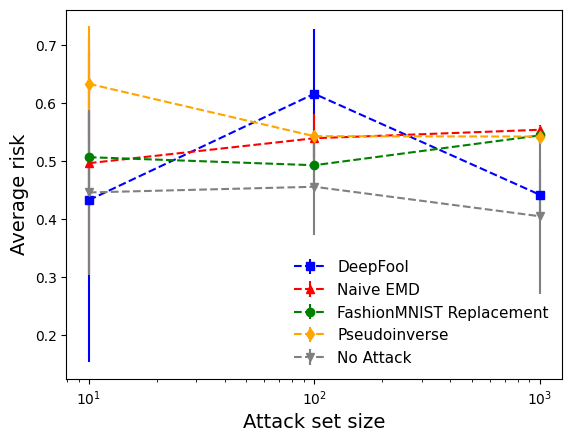}
        \caption{MobileNet-v2 on MNIST}
    \end{subfigure}
    
    \vspace{1em}
    
    % Second row
    \begin{subfigure}[b]{0.3\textwidth}
        \centering
        \includegraphics[width=\textwidth]{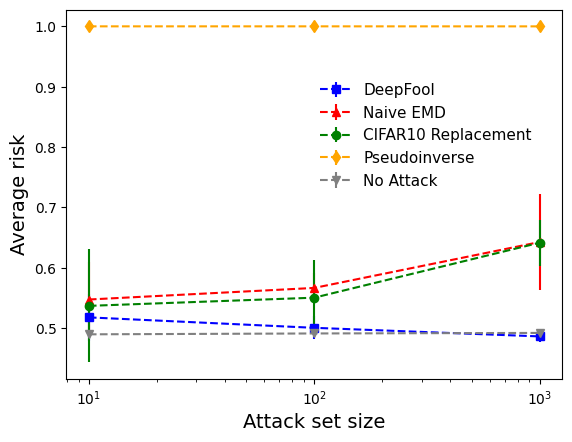}
        \caption{VGG-11 on SVHN}
    \end{subfigure}
    \hfill
    \begin{subfigure}[b]{0.3\textwidth}
        \centering
        \includegraphics[width=\textwidth]{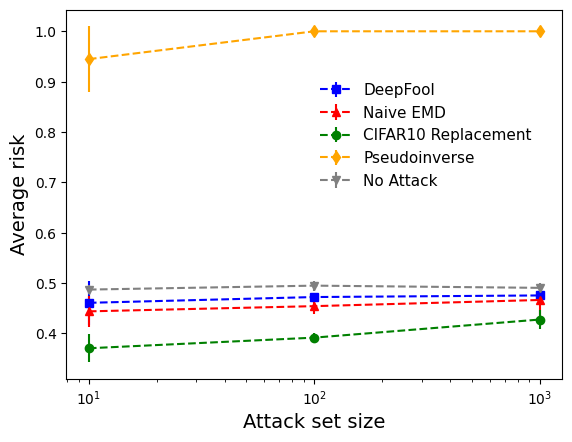}
        \caption{ResNet-18 on SVHN}
    \end{subfigure}
    \hfill
    \begin{subfigure}[b]{0.3\textwidth}
        \centering
        \includegraphics[width=\textwidth]{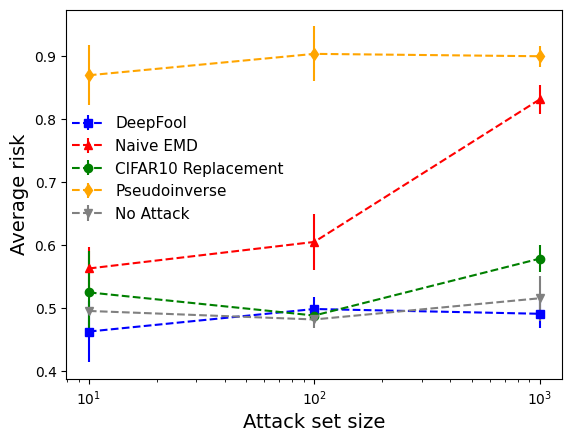}
        \caption{MobileNet-v2 on SVHN}
    \end{subfigure}
    
    \vspace{1em}
    
    % Third row
    \begin{subfigure}[b]{0.3\textwidth}
        \centering
        \includegraphics[width=\textwidth]{assets/mia/cifar10_vgg.png}
        \caption{VGG-11 on CIFAR10}
    \end{subfigure}
    \hfill
    \begin{subfigure}[b]{0.3\textwidth}
        \centering
        \includegraphics[width=\textwidth]{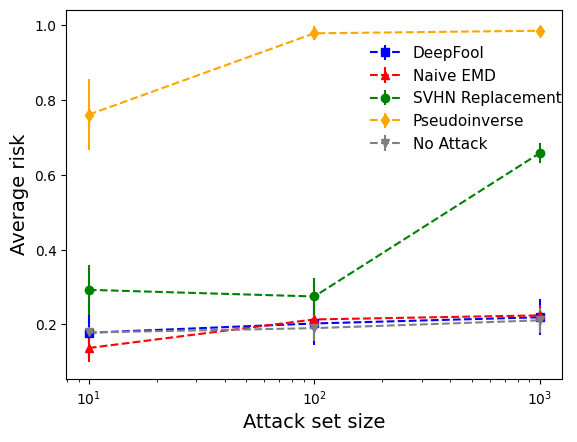}
        \caption{ResNet-18 on CIFAR10}
    \end{subfigure}
    \hfill
    \begin{subfigure}[b]{0.3\textwidth}
        \centering
        \includegraphics[width=\textwidth]{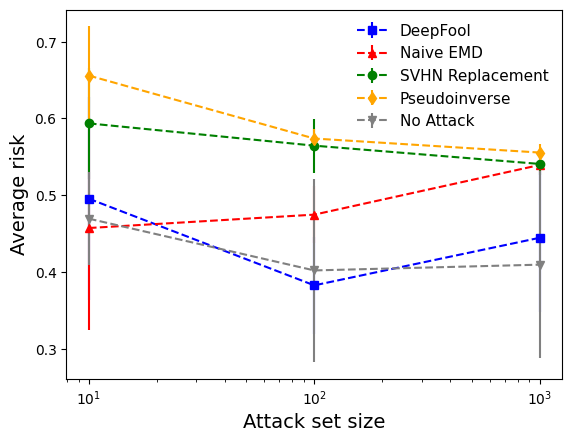}
        \caption{MobileNet-v2 on CIFAR10}
    \end{subfigure}

    \caption{\textbf{Attack comparison plots for Privacy Risk.} \texttt{PINV} outperforms other attacks across dataset and model architecture.}
    \label{fig:3x3grid-mia}
\end{figure}

\clearpage
\subsection{Other Learning Events}

In accordance with \citet{zhao2024scalability}, we consider 3 learning event proxies other than the confidence event detailed in the main paper. In line with the main results, we observe that \texttt{PINV} outperforms the other attacks regardless of the learning event considered. We also find that the relative ordering of the other attacks is preserved, which affirms the results of~\citet{jiang2020characterizing,zhao2024scalability} which find that these learning events have high correlation with label memorization, and each other.

\begin{table}[h!]
\centering
\resizebox{\textwidth}{!}{%
\begin{tabular}{c|ccc|ccc|ccc}
\toprule
\textbf{Attack} & \multicolumn{3}{c|}{\textbf{Max Confidence Event}} & \multicolumn{3}{c|}{\textbf{Entropy Event}} & \multicolumn{3}{c}{\textbf{Binary Correctness Event}} \\
\cmidrule(r){2-4} \cmidrule(r){5-7} \cmidrule(r){8-10}
& MNIST & SVHN & CIFAR-10 & MNIST & SVHN & CIFAR-10 & MNIST & SVHN & CIFAR-10 \\
\midrule
\texttt{None} & 0.01\pp0.00 & 0.05\pp0.00 & 0.17\pp0.00 & 0.02\pp0.00 & 0.16\pp0.00 & 0.50\pp0.00 & 0.00\pp0.00 & 0.04\pp0.00 & 0.15\pp0.00 \\
\midrule
\texttt{OOD} & 0.42\pp0.00 & 0.40\pp0.00 & 0.36\pp0.00 & 1.24\pp0.00 & 1.16\pp0.00 & 0.98\pp0.00 & 0.53\pp0.00 & 0.42\pp0.00 & 0.66\pp0.00  \\
\texttt{PINV} & \textbf{0.67\pp0.01} & \textbf{0.72\pp0.00} & \textbf{0.56\pp0.00} &\textbf{ 1.91\pp0.04} & \textbf{1.96\pp0.00} & \textbf{1.56\pp0.00} & \textbf{0.80\pp0.00} & \textbf{0.79\pp0.00} & \textbf{0.72\pp0.00}  \\
\texttt{EMD} & 0.40\pp0.00 & 0.60\pp0.00 & 0.43\pp0.00 & 1.21\pp0.01 & 1.69\pp0.02 & 1.21\pp0.01 & 0.40\pp0.00 & 0.55\pp0.00 & 0.50\pp0.01 \\
\texttt{DF} & 0.00\pp0.00 & 0.01\pp0.00 & 0.00\pp0.00 & 0.00\pp0.00 & 0.01\pp0.00 & 0.01\pp0.00 & 0.00\pp0.00 & 0.00\pp0.00 & 0.01\pp0.00 \\
\bottomrule
\end{tabular}%
}
\caption{\textbf{Expected attack advantage on ResNet-18 architecture with attack set of size 100.} \texttt{PINV} outperforms almost all other attacks by a significant margin across dataset and learning event.}
\label{fig:resnet-100-events}
\end{table}

\subsection{Random Noise}

Table~\ref{fig:random-table-curv} shows \texttt{PINV} compared with random noise in curvature scoring over ResNet-18 architecture. Despite lack of semantic meaning, \texttt{PINV} supports our theory linking high sensitivity queries with high memorization.

\begin{table}[H]
\centering
\begin{tabular}{c|ccc}
\toprule
\textbf{Attack} & Attack set size 10 & Attack set size 100 & Attack set size 1000 \\
\midrule
\texttt{Random Noise} & 0.13\pp0.00 & 0.05\pp0.00 & 0.01\pp0.00 \\
\texttt{PINV} & 0.36\pp0.00 & 0.21\pp0.00 & 0.13\pp0.00 \\
\bottomrule
\end{tabular}
\caption{\textbf{Curvature score of ResNet-18 architecture on CIFAR10.} \texttt{PINV} outperforms simple random noise in memorization scoring.}
\label{fig:random-table-curv}
\end{table}

\subsection{Boundary Points in DeepFool}

One might believe that samples that are most likely to change the decision boundary would be more memorized. We consider attack sets for \texttt{DF} consisting of what we call \textit{boundary points}. We define a sample as being closer to the boundary when the model's maximum softmax probability is smaller, indicating lower confidence for a model in its prediction on that sample. We find that this selection of attack set improves \texttt{DF}, but not enough to gain advantage over our simpler \texttt{PINV} that retains effectiveness without knowledge of the underlying dataset. Figure~\ref{fig:cifar10_vgg_boundary} displays the performance of \texttt{DF} on both random and boundary starting points, both of which fall short of our best performer in \texttt{PINV}.

\begin{figure}[h!]
    \centering
    \includegraphics[width=0.6\linewidth]{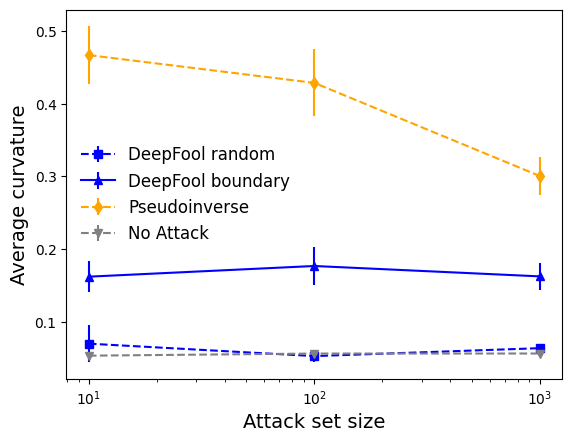}
    \caption{\textbf{Performance of \texttt{DF} on random vs. boundary starting points.} Although \texttt{DF} improves performance with smart selection of starting points, \texttt{PINV} remains superior without any knowledge of the underlying distribution.}
    \label{fig:cifar10_vgg_boundary}
\end{figure}

\subsection{Percentile}
\label{sec:appendix-percentile}

\begin{table}[H]
\centering
\resizebox{\textwidth}{!}{%
\begin{tabular}{c|ccc|ccc|ccc}
\toprule
\textbf{Attack} & \multicolumn{3}{c|}{\textbf{Loss Curvature}} & \multicolumn{3}{c|}{\textbf{Confidence Event}} & \multicolumn{3}{c}{\textbf{Privacy Score}} \\
\cmidrule(r){2-4} \cmidrule(r){5-7} \cmidrule(r){8-10}
& MNIST & SVHN & CIFAR-10 & MNIST & SVHN & CIFAR-10 & MNIST & SVHN & CIFAR-10 \\
\midrule
\texttt{None} & 50.0\pp0.9 & 49.2\pp7.9 & 51.2\pp3.5 & 51.3\pp5.9 & 50.6\pp19.8 & 49.3\pp12.0 & 50.4\pp5.0 & 49.2\pp3.9 & 49.6\pp15.2 \\
\midrule
\texttt{OOD} & 98.9\pp0.1 & 84.0\pp4.2 & \textbf{84.1\pp1.8} & 99.8\pp0.0 & 97.4\pp0.0 & 97.5\pp0.3 & 61.0\pp23. & 24.5\pp3.9 & 68.0\pp23.5  \\
\texttt{PINV} & \textbf{99.1\pp0.1} & \textbf{97.9\pp0.7} & 78.2\pp8.6 & \textbf{99.9\pp0.0} & \textbf{99.6\pp0.0} & \textbf{99.3\pp0.0} & \textbf{79.4\pp24.0} & \textbf{100.0\pp0.0} & \textbf{98.9\pp0.7} \\
\texttt{EMD} & 97.7\pp1.0 & 63.3\pp3.5 & 48.7\pp2.4 & 99.8\pp0.0 & 99.0\pp0.0 & 96.7\pp1.1 & 47.9\pp23.7 & 42.1\pp8.3 & 63.8\pp11.6 \\
\texttt{DF} & 50.6\pp8.6 & 53.1\pp2.9 & 50.7\pp14.1 & 50.1\pp2.6 & 54.3\pp8.3 & 50.4\pp3.8 & 55.6\pp8.4 & 45.7\pp0.5 & 60.6\pp12.7 \\
\bottomrule
\end{tabular}%
}
\caption{\textbf{Average attack score percentile on ResNet-18 architecture with attack set of size 100.} \texttt{PINV} outperforms almost all other attacks across dataset and proxy.}
\label{fig:resnet-100-precentile}
\end{table}

Table~\ref{fig:resnet-100-precentile} displays the mean and variance of score percentile before and after our four attacks for the ResNet-18 architecture with attack size of 100. Unsurprisingly, \texttt{PINV} remains the best attack across training setting, in line with our other results. However, when looking at only percentile, the gap between \texttt{PINV} and other attacks is not as dramatic as when looking at raw scores. We find that the percentile results affirms~\citet{feldman2020does}'s observations that few samples have exhibit strong memorization in the form of a long-tail distribution. While the magnitude of curvature of our attack points is small, our attacks display effectiveness in creating highly ranked samples. \texttt{PINV} remains the most desirable attack with its computational efficiency and weaker threat model.

\subsection{Test Correctness}

Table~\ref{fig:accuracy-cifar10} displays the mean and variance of test accuracy of a model before and after each attack over $t=5$ trials over CIFAR10. Unsurprisingly, the test accuracy does not change significantly with our attacks, considering we only perturb a small fraction (2\%) of the training data set at most. Our attacks would remain undetected by any test accuracy analysis.

\begin{table}[H]
\centering
\resizebox{\textwidth}{!}{%
\begin{tabular}{c|ccc|ccc|ccc}
\toprule
\textbf{Model} & \multicolumn{3}{c|}{\textbf{VGG-11}} & \multicolumn{3}{c|}{\textbf{ResNet-18}} & \multicolumn{3}{c}{\textbf{MobileNetV2}} \\
\cmidrule(r){2-4} \cmidrule(r){5-7} \cmidrule(r){8-10}
Attack set size & 10 & 100 & 1000 & 10 & 100 & 1000 & 10 & 100 & 1000 \\
\midrule
\texttt{None} & 76.4\pp0.9 & 77.1\pp0.2 & 76.8\pp0.6 & 78.9\pp0.2 & 79.6\pp0.0 & 78.8\pp0.6 & 69.4\pp0.6 & 69.7\pp0.5 & 68.8\pp0.4 \\
\midrule
\texttt{OOD} & 75.6\pp6.2 & 76.3\pp1.5 & 75.7\pp0.3 & \textbf{79.2\pp0.0} & 78.4\pp2.5 & 77.1\pp0.4 & 68.8\pp1.5 & 69.6\pp1.1 & 69.0\pp0.3  \\
\texttt{PINV} & \textbf{78.0\pp1.0} & \textbf{76.6\pp0.1} & 76.4\pp1.2 & 79.1\pp0.4 & 78.3\pp0.3 & 77.9\pp0.5 & 67.8\pp0.2 & 68.7\pp1.5 & 68.8\pp0.3  \\
\texttt{EMD} & 76.3\pp2.5 & 76.0\pp1.3 & \textbf{76.5\pp0.43} & 79.0\pp0.5 & 79.2\pp0.5 & 76.5\pp2.2 & 68.7\pp2.0 & 69.4\pp1.3 & 69.1\pp1.1 \\
\texttt{DF} & 77.1\pp1.7 & 77.2\pp1.3 & 76.6\pp1.0 & 78.9\pp0.6 & \textbf{79.3\pp0.1} & \textbf{79.5\pp0.5} & \textbf{69.1\pp0.4} & \textbf{70.4\pp0.4} & \textbf{70.1\pp1.8} \\
\bottomrule
\end{tabular}%
}
\caption{\textbf{Test accuracy before and after attack on CIFAR10.} All attacks have negligible influence on testing accuracy.}
\label{fig:accuracy-cifar10}
\end{table}

\subsection{Vision Transformers}
\label{sec:appendix-vit}

\begin{table}[H]
\centering
\begin{tabular}{c|ccc}
\toprule
\textbf{Attack} & Attack set size 10 & Attack set size 100 & Attack set size 1000 \\
\midrule
\texttt{None} & 0.01\pp0.00 & 0.01\pp0.00 & 0.01\pp0.00 \\
\texttt{PINV} & 0.03\pp0.00 & 0.03\pp0.00 & 0.04\pp0.00 \\
\bottomrule
\end{tabular}
\caption{\textbf{Curvature score of ViT architecture on ImagetNet.} \texttt{PINV} retains its high attack scoring capability over non-convolutional networks and higher resolution datasets.}
\label{fig:vit-table-curv}
\end{table}

\begin{table}[H]
\centering
\begin{tabular}{c|ccc}
\toprule
\textbf{Attack} & Attack set size 10 & Attack set size 100 & Attack set size 1000 \\
\midrule
\texttt{None} & 0.02\pp0.00 & 0.02\pp0.00 & 0.02\pp0.00 \\
\texttt{PINV} & 0.87\pp0.00 & 0.87\pp0.00 & 0.86\pp0.00 \\
\bottomrule
\end{tabular}
\caption{\textbf{Confidence event score of ViT architecture on ImagetNet.} \texttt{PINV} retains its high attack scoring capability over non-convolutional networks and higher resolution datasets.}
\label{fig:vit-table-conf}
\end{table}

Tables \ref{fig:vit-table-curv} and \ref{fig:vit-table-conf} display the curvature and confidence scoring of ImageNet~\citep{russakovsky2015imagenet} data before and after the \texttt{PINV} attack on the Vision Transformer (ViT) architecture~\citep{dosovitskiy2020image}. Our results validate our theoretical claims that high sensitivity queries are model and dataset agnostic in attacking memorization scores.

\subsection{Text Classification}
\label{sec:appendix-bert}

 \begin{table}[H]
\centering
\begin{tabular}{c|ccc}
\toprule
\textbf{Attack} & Attack set size 10 & Attack set size 100 & Attack set size 1000 \\
\midrule
\texttt{None} & 0.04\pp0.00 & 0.04\pp0.00 & 0.04\pp0.00 \\
\texttt{INV} & 0.55\pp0.00 & 0.61\pp0.00 & 0.39\pp0.00 \\
\bottomrule
\end{tabular}
\caption{\textbf{Confidence event score of BERT architecture on AG News classification dataset.} Inverse operations (\texttt{INV}) retain its high attack scoring capability over textual data.}
\label{fig:agnews-table-conf}
\end{table}

Table~\ref{fig:agnews-table-conf} displays confidence scoring on AG News~\citep{zhang2015character} text classification dataset over BERT~\citep{devlin2019bert} transformer architecture. The \texttt{INV} attack describes the additive inverse operation on the additive integer group modulo size vocab length described by a pretrained tokenizer. Our results show that inverse operations display high memorization scoring agnostic of data modality.

\end{document}